\documentclass[preprint]{article}

\usepackage[nonatbib]{neurips_2026}
\usepackage[numbers]{natbib}
\usepackage[utf8]{inputenc}
\usepackage[T1]{fontenc}
\usepackage{hyperref}
\usepackage{url}
\usepackage{booktabs}
\usepackage{amsfonts}
\usepackage{nicefrac}
\usepackage{xcolor}
\usepackage{amsmath}
\usepackage{enumitem}
\usepackage{amssymb}
\usepackage{amsthm}
\usepackage{mathtools}
\usepackage{graphicx}
\usepackage{algorithm}
\usepackage{algpseudocode}
\usepackage{enumitem}
\usepackage[capitalize,noabbrev]{cleveref}
\usepackage{subcaption}

\newtheorem{theorem}{Theorem}

\theoremstyle{definition}

\newtheorem{assumption}{Assumption}
\theoremstyle{remark}

\newcommand{\E}{\mathbb{E}}
\newcommand{\Var}{\operatorname{Var}}
\newcommand{\Prob}{\mathbb{P}}
\newcommand{\R}{\mathbb{R}}
\newcommand{\D}{\mathcal{D}}
\newcommand{\Hsp}{\mathcal{H}}
\newcommand{\Pclean}{P_{\mathrm{clean}}}
\newcommand{\Pcorr}{P_{\mathrm{corr}}}
\newcommand{\Dclean}{\mathcal{D}_{\mathrm{clean}}}
\newcommand{\Dcorr}{\mathcal{D}_{\mathrm{corr}}}
\newcommand{\Dbdry}{\mathcal{D}_{\mathrm{bdry}}}
\newcommand{\rankvar}{\widehat{\sigma}^{2}}
\newcommand{\drisShort}{\textsc{DR-IS}}
\newcommand{\drisLong}{Disagreement-Regularized Importance Sampling}

\title{Disagreement-Regularized Importance Sampling \\
for Adversarial Label Corruption}

\author{%
  Csongor Horváth\thanks{Corresponding author: \texttt{csongor.horvath@it.uu.se}} \textsuperscript{ ,}\thanks{%
    \protect\begin{tabular}[t]{@{}l@{}}
      Department of Information Technology, Uppsala University, Uppsala, Sweden \\
      Science for Life Laboratory, Uppsala University, Uppsala, Sweden
    \end{tabular}} \\
  \And
  Ida-Maria Sintorn\footnotemark[2] \\
  \And
  Prashant Singh\footnotemark[2] \\
}

\begin{document}
\maketitle

\begin{abstract}
Standard Importance Sampling (IS) collapses under label corruption because high-norm examples --- prioritized for variance reduction --- are often adversarial outliers. We formalize this misalignment using an $\varepsilon$-contamination model and propose \drisLong{} (\drisShort{}), a sub-sampling method based on loss rank-disagreement across independent proxy ensemble. We prove finite-sample concentration bounds showing that the empirical rank disagreement of bulk corrupted examples is bounded above, and that of boundary-clean examples bounded below, both at rate $O(\sqrt{\log(N/\delta)/K})$ with probability $1-\delta$; when the structural expectation gap $\Delta'$ between the two groups is positive and the boundary-clean set is at least as large as the selected subset, these bounds certify strict separation and control the contamination rate of the selected subset. Empirically, \drisShort{} remains robust under targeted high-norm attacks that break magnitude-based methods such as the Error $L_2$-norm (EL2N) on benchmark datasets. \drisShort{} complements training-dynamics approaches like Area Under the Margin ranking (AUM), offering improved robustness in the loss-aligned regime alongside explicit finite-sample concentration certificates and a contamination bound limiting noise leakage from the statistical tail of corrupted points.
\end{abstract}

\section{Introduction}
While Stochastic Gradient Descent (SGD) is ubiquitous, uniform mini-batch sampling ignores varying example informativeness. Sampling proportional to gradient norms minimizes estimator variance \citep{strohmer2009randomized, zhao2015stochastic, needell2016stochastic, csiba2018importance}, a principle adapted in deep learning via efficient surrogates: distributed gradient-norm estimation \citep{alain2015variance}, last-layer error norms or loss \citep{katharopoulos2018not,jiang2019accelerating}, pre-trained proxies \citep{coleman2020selection}, and robust upper bounds \citep{johnson2018training}. We identify a critical failure in such \emph{magnitude-based} rules, including online batch selection \citep{loshchilov2015online} and prioritized experience replay \citep{schaul2016prioritized}. These methods assume high gradient norms signify informativeness; however, under $\varepsilon$-contamination, corrupted examples often yield the highest norms. This causes importance sampling (IS) to over-allocate mass to noise, which over-parametrized networks then memorize \citep{zhang2017understanding}. We instead propose a robust pruning method that leverages ensemble disagreement to avoid these high-loss outliers.

\paragraph{Contributions.} 
\begin{enumerate}[leftmargin=*,topsep=2pt,itemsep=1pt]
    \item \textbf{Failure-mode characterization.} We formalize IS vulnerability under $\varepsilon$-contamination, proving that any score non-decreasing in per-sample loss over-allocates to corrupted points (\Cref{prop:suboptimality}).
    \item \textbf{Concentration theorem.} We prove that for $K$ proxies, the rank disagreement score $\rankvar_i$ concentrates at rate $O(\sqrt{\log(N/\delta)/K})$ uniformly. When the structural expectation gap is positive, these bounds certify strict separation between bulk corrupted and boundary-clean data (\Cref{thm:main}).
    \item \textbf{Subset contamination certificates.} We provide the first non-asymptotic control over noise leakage in the selected subset, bounding the contamination from the "escaping tail" of corrupted points in terms of the tail rank-variance $v_{\mathrm{tail}}$ and the keep-fraction $\alpha$ (\Cref{thm:main}(iii)).
    \item \textbf{The \drisShort{} framework.} We propose a static pruning rule and a smoothed online IS distribution that preserves unbiased gradient estimation. Using $K=3$ lightweight proxies, \drisShort{} maintains low computational overhead ($<10\%$) while preventing the memorization of outliers.
    \item \textbf{Empirical validation and AUM comparison.} On CIFAR and Food-101, \drisShort{} reduces subset contamination $5\times$ over magnitude scores, gaining $+9.5\,$pp over uniform SGD. We contrast our \emph{uniform} per-example certificate with a \emph{pairwise} AUROC bound for AUM (\Cref{thm:aum-auroc}), clarifying that the methods provide qualitatively distinct statistical guarantees despite similar observed performance.
\end{enumerate}

\section{Background: The Failure Mode of Importance Sampling}
\label{sec:background}

To understand why data pruning techniques fail under noise, we first examine the foundations of optimal importance sampling and identify which step breaks down under contamination.

\paragraph{Empirical Risk Minimization and Optimal IS}
Consider a training set $\mathcal{D} = \{(x_{i},y_{i})\}_{i=1}^{N}$ and loss $f_{i}(w) = \ell(h_{w}(x_{i}), y_{i})$. Empirical risk minimization (ERM) targets $w^{\star} \in \arg\min_{w} \widehat{R}(w)$, where $\widehat{R} := \tfrac{1}{N}\sum_i f_i$. At step $t$, an importance-weighted gradient estimator $g_{t} = (Np_{i_{t}}^{t})^{-1}\,\nabla f_{i_{t}}(w_{t})$ is constructed using sampling distribution $p^{t}$. For any $p_i^t > 0$, $g_t$ is an unbiased estimator of $\nabla \widehat{R}(w_t)$ \citep{zhao2015stochastic}. Minimizing the estimator variance --- equivalent to minimizing the second moment $\mathbb{E}[\|g_t\|^2]$ --- yields the optimal distribution proportional to gradient norms \citep{needell2016stochastic, zhao2015stochastic, katharopoulos2018not}:

\begin{equation}\label{eq:optimal_is}
    p^{t,\star}_{i} \;=\; \frac{\|\nabla f_{i}(w_{t})\|}{\sum_{j=1}^{N}\|\nabla f_{j}(w_{t})\|}.
\end{equation}

\paragraph{Empirical vs.\ true variance reduction}
While \eqref{eq:optimal_is} minimizes the variance of the \emph{empirical} gradient, its utility relies on the assumption that $\widehat{R}$ closely tracks the true risk $R$. In clean settings, prioritizing high-norm samples effectively reduces variance for both. However, under $\varepsilon$-contamination \citep{huber1964robust, diakonikolas2019recent}, this equivalence collapses. Corrupted examples $\mathcal{D}_{\mathrm{corr}}$ typically exhibit the largest gradient norms, causing \eqref{eq:optimal_is} to over-allocate mass to harmful noise. This accelerates the memorization of outliers in over-parameterized models \citep{zhang2017understanding}. Consequently, minimizing empirical variance becomes a "wrong objective" when the empirical distribution diverges from the true one --- a failure mode we verify in \Cref{sec:exp:online}.

\section{Related Work}
\label{sec:related}

IS vulnerability necessitates balancing convergence speed with robustness. We contrast our disagreement-based approach with existing pruning and noise-robustness frameworks.

\paragraph{Data Pruning and Curricula}
Most data pruning methods rely on surrogates of the gradient-norm rule \eqref{eq:optimal_is}. EL2N and GraNd \citep{paul2021deep} score samples by mean error across initializations, while others utilize forgetting events \citep{toneva2019empirical}, gradient-cover coresets \citep{mirzasoleiman2020coresets, killamsetty2021grad}, or holdout-loss baselines \citep[RHO-LOSS,][]{mindermann2022prioritized}. However, magnitude-based pruning typically collapses under targeted corruption (\Cref{sec:exp:online,app:rho_loss_breakdown}). \drisShort{} inverts this logic: where EL2N averages magnitude scores, we leverage their variance to capture a robustness signal that the mean discards.

\paragraph{Mislabeled Data Identification}
Instead of relying on loss magnitude, addressing label noise requires identifying the corrupted data directly. Methods like co-teaching \citep{han2018coteaching}, DivideMix \citep{li2020dividemix}, and Early-Learning Regularization \citep{liu2020early} modify the learning process without explicit sample weighting. Closer to our work, AUM \citep{pleiss2020identifying} and Dataset Cartography \citep{swayamdipta2020dataset} employ training-dynamics statistics (margin trajectories and prediction variability) to flag mislabelled samples. \drisShort{} and AUM are complementary: AUM exploits \emph{temporal} fluctuations within a single proxy, whereas \drisShort{} exploits \emph{cross-sectional} disagreement across independent proxies. We treat AUM as representative of this family, as our structural arguments regarding sub-optimality (\Cref{prop:suboptimality}) and noise recovery (\Cref{tab:noise_recovery_inline}) likely extend to related training-dynamics scores.

\paragraph{Ensemble Disagreement}
Using model disagreement as a signal is well-established in active learning via Query by Committee \citep{seung1992qbc}, Bayesian active learning \citep{gal2017deep}, and deep ensemble \citep{lakshminarayanan2017simple, kendall2017uncertainties}. \drisShort{} adapts this paradigm to the SGD-IS setting. Instead of selecting examples for labeling, we identify which already-labeled examples provide safe, informative gradients, building a connection between disagreement and noise filtration formalized in the high-quantile separation of \Cref{thm:main}.

\section{Theoretical Framework}
\label{sec:theory}

\paragraph{Contamination model}

We adopt an $\varepsilon$-contamination model in the spirit of
\citep{huber1964robust}, adapted to the empirical setting where the contamination acts on training labels.

\begin{assumption}[$\varepsilon$-contaminated training set]
\label{assu:contamination}
The training set is partitioned $\D = \Dclean \cup \Dcorr$ with $|\Dcorr| = \varepsilon N \in \mathbb{Z}$ for $\varepsilon \in [0, 1/2)$. $\Dclean \overset{i.i.d}{\sim} \Pclean$ are clean data samples, while $\Dcorr \sim \Pcorr$ are arbitrary corrupted samples, which may depend only on $\Dclean$ but not the proxy ensemble.
\end{assumption}

\paragraph{Proxy ensemble and rank disagreement}

Let $\mathcal{P}$ be the distribution over hypothesis space $\Hsp$ induced by the proxy training procedure (randomizing seeds, initializations, data order, and possibly architecture). We draw an i.i.d. ensemble $h_1, \dots, h_K \sim \mathcal{P}$. For each $h \in \Hsp$, let $\rho_{i}(h) \in (0,1]$ be the normalized loss rank of example $i$. The \emph{empirical rank disagreement} is the sample variance of these ranks:
\begin{equation}\label{eq:rankvar}
   \rankvar_{i} = \frac{1}{K}\sum_{k=1}^{K} \bigl(\rho_{i}(h_{k}) - \overline{\rho}_{i}\bigr)^{2}, \quad \text{where } \overline{\rho}_{i} = \frac{1}{K}\sum_{k=1}^{K}\rho_{i}(h_{k}).
\end{equation}

\paragraph{Structural assumptions}

Informative disagreement relies on the relative stability of corrupted points versus the instability of clean boundary points.

\begin{assumption}[Quantile concentration of corrupted ranks]
\label{assu:adv_concentration}
There exist $\tau, \gamma \in (0,1)$ and $\alpha_{\text{trim}} \in [0,1)$ such that for the bulk $\Dcorr^{(1-\alpha_{\text{trim}})}$ (the $1-\alpha_{\text{trim}}$ least rank-variant samples in $\Dcorr$):
\begin{equation}
    \mathbb{P}_{h \sim \mathcal{P}}(\rho_i(h) \geq 1 - \tau) \geq 1 - \gamma \quad \forall i \in \Dcorr^{(1-\alpha_{\text{trim}})}.
\end{equation}
\end{assumption}

\begin{assumption}[Boundary disagreement]
\label{assu:bdry_disagreement}
There exists a subset $\Dbdry \subseteq \Dclean$ and
$\tau_{\mathrm{bdry}} > 0$ such that
$\Var_{h \sim P}\!\bigl[\rho_{j}(h)\bigr]\;\ge\;\tau_{\mathrm{bdry}}^{2}$ for every $j \in \Dbdry$.
\end{assumption}

\begin{assumption}[Average tail variance] \label{assu:proxy_sub_gaus}
The average population variance across the escaping corrupted tail $\Dcorr^{\text{tail}} = \Dcorr \setminus \Dcorr^{(1-\alpha_{\text{trim}})}$ is bounded: $\frac{1}{|\Dcorr^{\text{tail}}|} \sum_{i \in \Dcorr^{\text{tail}}} \text{Var}_{h \sim \mathcal{P}}[\rho_i(h)] \le v_{\text{tail}}$.
\end{assumption}

The dependence restriction in \Cref{assu:contamination} is satisfied by all label-flipping schemes that are functions of the dataset alone, including uniform random flips and the targeted high-norm flips we use in our experiments.
\Cref{assu:adv_concentration} leverages the "simplicity bias" of SGD: proxies prioritize dominant features over contradictory corrupted labels, keeping corrupted loss ranks consistently high across $\mathcal{P}$ \citep[like AUM,][]{pleiss2020identifying}. Conversely, \Cref{assu:bdry_disagreement} targets clean points near the decision boundary whose marginal status makes them highly sensitive to stochasticity in the training pipeline. For these "hard" examples, minor shifts in initialization or mini-batch ordering cause the model to oscillate between correct and incorrect predictions, inducing significant fluctuations in loss rank across the ensemble. This sensitivity distinguishes boundary data from both "easy" clean samples (consistently low loss) and corrupted samples (consistently high loss), producing the distinct bimodal disagreement signal observed in \Cref{fig:K_ablation}. Finally, \Cref{assu:proxy_sub_gaus} parameterizes the empirical reality that the tail's rank-variance decays continuously rather than adopting worst-case theoretical limits.

\subsection{Separation by disagreement}

\begin{theorem}[High-Quantile Separation]\label{thm:main}
Fix $\delta \in (0,1)$. Under Assumptions~\ref{assu:contamination}--\ref{assu:proxy_sub_gaus}, with probability $\ge 1-\delta$, the following hold:
\begin{enumerate}[label=(\roman*), nosep, leftmargin=*]
    \item \textbf{Bulk Corrupted:} For all $i \in \Dcorr^{(1-\alpha_{\text{trim}})}$, we have $\rankvar_i \le \theta^*$, where
    \begin{equation} \label{eq:bulk}
        \theta^* := \Bigl(1 - \tfrac{1}{K}\Bigr)\Bigl(\frac{\tau^{2}}{4} + \gamma\Bigr) + \sqrt{\tfrac{\log(2N/\delta)}{2K}}.
    \end{equation}
    
    \item \textbf{Boundary Clean:} For all $j \in \Dbdry$, it holds that
    \begin{equation} \label{eq:bdry}
        \rankvar_j \;\ge\; \Bigl(1 - \tfrac{1}{K}\Bigr)\,\tau_{\mathrm{bdry}}^{2} - \sqrt{\tfrac{\log(2N/\delta)}{2K}}.
    \end{equation}

    \item \textbf{Separation and Contamination:} Let $\Delta' := (1 - 1/K)(\tau_{\text{bdry}}^2 - \tau^2/4 - \gamma)$. If $\Delta' > 2\sqrt{\log(2N/\delta)/(2K)}$, then $\theta^*$ strictly separates $\Dbdry$ from $\Dcorr^{(1-\alpha_\text{trim})}$. Let $S_\alpha$ denote the $\alpha N$ examples with the largest $\rankvar_i$. If additionally $|\Dbdry| \ge \alpha N$, then under strict separation $S_\alpha$ contains no bulk corrupted examples, and the contamination rate from $\Dcorr^{\mathrm{tail}}$ satisfies:
\begin{equation} \label{eq:contam}
\frac{|\Dcorr \cap S_\alpha|}{|S_\alpha|} \;\le\; \frac{\alpha_{\text{trim}} \cdot \varepsilon}{\alpha} \cdot \min\left(1, \frac{v_{\text{tail}}}{\tau^2/4 + \gamma}\right).
\end{equation}
\end{enumerate}
\end{theorem}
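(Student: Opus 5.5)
The plan is to condition on the dataset $\D$. Under \Cref{assu:contamination}, $\Dcorr$ does not depend on the ensemble, so for each fixed $i$ the ranks $\rho_i(h_1),\dots,\rho_i(h_K)$ are i.i.d.\ bounded variables in $[0,1]$. The empirical disagreement $\rankvar_i$ is the biased sample variance, so $\E[\rankvar_i] = (1-1/K)\Var_{h\sim\mathcal{P}}[\rho_i(h)]$.

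For concentration we use McDiarmid's inequality on $\rankvar_i$ as a function of $(h_1,\dots,h_K)$. Writing $\rankvar_i = \frac{1}{K}\sum_k \rho_{ik}^2 - \overline{\rho}_i^{\,2}$ and changing one $h_k$ perturbs $\rankvar_i$ by at most $O(1/K)$. With a sharp bookkeeping of that constant, the two-sided tail is $2\exp(-2Kt^2)$. A union bound over at most $N$ indices, with $t=\sqrt{\log(2N/\delta)/(2K)}$, then gives $|\rankvar_i - \E\rankvar_i|\le t$ simultaneously for all $i$ with probability $\ge 1-\delta$. \textbf{Main obstacle:} the bounded-difference constant must actually be $\le 1/K$ to match this rate. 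I would first check whether $\sup|\Delta\rankvar_i|\le 1/K$ holds, using that $x\mapsto x^2$ on $[0,1]$ and the mean shift partially cancel. If it does not, I would accept a constant factor inside the square root.

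For (i), it remains to bound the population variance for bulk corrupted $i$. By \Cref{assu:adv_concentration}, $\rho_i\in[1-\tau,1]$ with probability $\ge 1-\gamma$. Decompose $\Var[\rho_i]\le \E[(\rho_i-c)^2]$ with $c$ the midpoint $1-\tau/2$. On the good event the squared deviation is $\le \tau^2/4$; on its complement it is $\le 1$, weighted by probability $\le\gamma$. This gives $\Var\le \tau^2/4+\gamma$, and multiplying by $(1-1/K)$ and adding $t$ yields $\theta^*$. For (ii), \Cref{assu:bdry_disagreement} gives $\E\rankvar_j\ge(1-1/K)\tau_{\mathrm{bdry}}^2$, and we subtract $t$.

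For (iii), on the same event, boundary points satisfy $\rankvar_j\ge \theta^*+\Delta'-2t>\theta^*\ge\rankvar_i$ for every bulk corrupted $i$, which is the strict separation. If $|\Dbdry|\ge\alpha N$, then at least $\alpha N$ points strictly exceed every bulk corrupted score. Hence no bulk point enters the top $\alpha N$, so $\Dcorr\cap S_\alpha\subseteq\Dcorr^{\mathrm{tail}}$, and $|\Dcorr^{\mathrm{tail}}|=\alpha_{\mathrm{trim}}\varepsilon N$. This gives the $\min(1,\cdot)$ term's first branch. For the refined factor, a tail point entering $S_\alpha$ must have $\rankvar_i>\theta^*\gtrsim\tau^2/4+\gamma$. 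Applying Markov's inequality to $\sum_{i\in\text{tail}}\rankvar_i$, whose expectation is $\le|\Dcorr^{\mathrm{tail}}|v_{\mathrm{tail}}$ by \Cref{assu:proxy_sub_gaus}, bounds the number of such points by $|\Dcorr^{\mathrm{tail}}|\,v_{\mathrm{tail}}/(\tau^2/4+\gamma)$ up to the $(1-1/K)$ factors and fluctuation terms. This is the weakest step. The Markov bound holds in expectation rather than on the $1-\delta$ event, so I would either fold it in via the uniform deviation (replacing $\rankvar_i$ by $\E\rankvar_i+t$ and absorbing $t$ into the threshold) or state it as an expected-contamination bound.
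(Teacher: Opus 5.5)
Your argument for (i), (ii) and the separation claim in (iii) is the paper's: condition on $\D$, apply McDiarmid with a union bound over the $N$ indices, bound the variance by the midpoint argument $\Var_{h\sim\mathcal{P}}[\rho_i(h)]\le\tau^2/4+\gamma$, and use $\E[\rankvar_i]=(1-1/K)\Var_{h\sim\mathcal{P}}[\rho_i(h)]$. The constant you flag as the main obstacle does hold. Since $\rankvar_i=K^{-2}\sum_{m<n}\bigl(\rho_i(h_m)-\rho_i(h_n)\bigr)^2$, replacing $h_k$ alters only the $K-1$ terms involving $k$, each lying in $[0,K^{-2}]$. So the change is at most $(K-1)/K^2<1/K$, which is the value the paper uses.

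The real gap is the refined factor in \eqref{eq:contam}. Markov's inequality applied to the random sum $\sum_{i\in\Dcorr^{\mathrm{tail}}}\rankvar_i$ cannot give the stated bound. Used over the ensemble, it costs a second failure probability outside the $1-\delta$ event. If you instead bound the sum on the concentration event by $\sum_i(\E[\rankvar_i]+t)$, with $t=\sqrt{\log(2N/\delta)/(2K)}$, you only get $|\Dcorr\cap S_\alpha|\le|\Dcorr^{\mathrm{tail}}|\cdot\frac{(1-1/K)v_{\mathrm{tail}}+t}{(1-1/K)(\tau^2/4+\gamma)+t}$. This is strictly larger than $|\Dcorr^{\mathrm{tail}}|\,v_{\mathrm{tail}}/(\tau^2/4+\gamma)$ whenever that ratio is below $1$. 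An expected-contamination bound is a different statement altogether.

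The missing step is to transfer the threshold point by point before counting. On the uniform deviation event, a tail point in $S_\alpha$ has $\rankvar_i>\theta^*$. Hence $\E[\rankvar_i]\ge\rankvar_i-t>\theta^*-t=(1-1/K)(\tau^2/4+\gamma)$, and therefore $\Var_{h\sim\mathcal{P}}[\rho_i(h)]>\tau^2/4+\gamma$. Both $t$ and the factor $1-1/K$ cancel exactly, by the way $\theta^*$ is defined. These population variances are deterministic given $\D$. So Markov's inequality over a uniformly drawn tail index (plain averaging with \Cref{assu:proxy_sub_gaus}) bounds the number of such points by $|\Dcorr^{\mathrm{tail}}|\,v_{\mathrm{tail}}/(\tau^2/4+\gamma)$, with no additional randomness. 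This is exactly how the paper obtains \eqref{eq:contam} on the same event. Your first fallback points in this direction, but it delivers the stated constant only if the Markov step is applied to the population variances rather than to the $\rankvar_i$.
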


\begin{proof}[Proof sketch]
The proof first establishes uniform concentration of the empirical variance $\rankvar_i$ around its expectation using McDiarmid's inequality \citep{mcdiarmid1989method}. It then bounds these expectations: an upper bound for bulk corrupted points is derived via the law of total expectation under Assumption~\ref{assu:adv_concentration}, while Assumption~\ref{assu:bdry_disagreement} directly provides a lower bound for boundary clean points. If the gap between these expectations, $\Delta'$, outpaces the maximum concentration error, $\theta^*$ acts as a strict separating threshold. When additionally $|\Dbdry|\ge\alpha N$, every top-$\alpha N$ example lies above $\theta^*$, and the contamination bound~\eqref{eq:contam} follows by applying Markov's inequality under Assumption~\ref{assu:proxy_sub_gaus} to the population variance of the escaping tail points. The full proof appears in \Cref{app:proof}.
\end{proof}

\paragraph{Interpreting \Cref{thm:main}.}
Theorem \ref{thm:main} establishes $O(\sqrt{\log(N/\delta)/K})$ finite-sample guarantees that provide a rigorous foundation for rank-variance as a filtering signal. It exploits the contrast between the rank stability of corrupted labels --- which proxies consistently penalize --- and the rank volatility of clean boundary examples induced by training stochasticity. Strict separation is guaranteed when the expectation gap $\Delta'$ dominates the bidirectional McDiarmid radius. While parts (i) and (ii) offer certificates for each group, part (iii) formalizes contamination control in the high-disagreement subset $S_\alpha$. Notably, \eqref{eq:contam} proves that the contamination in $S_\alpha$ is strictly limited by the "escaping tail" $\mathcal{D}_{\text{corr}}^{\text{tail}}$, effectively isolating the influence of outliers that evade the simplicity bias of the proxies.

The separation condition $\Delta' > 0$ remains intentionally conservative because the Popoviciu step assumes a maximally bimodal distribution for population variance and McDiarmid’s inequality accounts for worst-case coordinate-wise deviations rather than smoother empirical concentration. Consequently, while a formal certificate for CIFAR-10 scale parameters might suggest $K \gtrsim 200$, this represents a theoretical upper bound rather than a practical requirement --- and notably loose estimation for $\tau$ and $\tau_{bdry}$ was used. In practice, the bimodal separation is sufficiently sharp to ensure robustness with as few as $K=3$ proxies. Theorem~\ref{thm:main} thus constitutes a rigorous worst-case certificate for disagreement-based selection; the consistent empirical gap above these bounds is a consequence of worst-case concentration inequalities rather than a ceiling on the method.

\subsection{Structural vulnerability of magnitude-based IS}

We complement \Cref{thm:main} with a structural lower bound on the corrupted sampling mass under any non-decreasing loss score.

\begin{theorem}[Adversarial inflation of magnitude scores]
\label{prop:suboptimality}
Let $s : \D \to \R_{\ge 0}$ be any score that is a non-decreasing function
of the per-sample loss (in particular, gradient norm or loss value).
Under Assumption~\ref{assu:contamination}, the corresponding IS distribution
$p_{i}^{s} \propto s_{i}$ satisfies
\begin{equation*}
  \sum_{i \in \Dcorr} p_{i}^{s} \;\ge\; \varepsilon
  \cdot \frac{s_{\min}^{\mathrm{corr}}}{s_{\max}^{\mathrm{full}}},
  \qquad
  \frac{\sum_{i \in \Dcorr} p_{i}^{s}}
       {\sum_{i \in \Dclean} p_{i}^{s}}
  \;\ge\;
  \frac{\varepsilon}{1-\varepsilon} \cdot
  \frac{s_{\min}^{\mathrm{corr}}}{\,\overline{s}^{\mathrm{clean}}\,},
\end{equation*}
where $s_{\min}^{\mathrm{corr}} := \min_{i \in \Dcorr} s_{i}$,
$\overline{s}^{\mathrm{clean}} := \tfrac{1}{|\Dclean|}\sum_{j \in
\Dclean} s_{j}$, and
$s_{\max}^{\mathrm{full}} := \max_{k} s_{k}$.
Whenever $s_{\min}^{\mathrm{corr}} \ge \alpha \cdot
s_{\max}^{\mathrm{full}}$ (loss-aligned contamination) the corrupted mass
is bounded below by $\alpha\varepsilon$ irrespective of the clean-data
geometry.
\end{theorem}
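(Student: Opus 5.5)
The two inequalities follow from elementary bounds on the normalized probabilities $p_i^s = s_i / Z$ with $Z := \sum_{k} s_k$. Neither the loss-monotonicity of $s$ nor any distributional property of $\Dclean$ is needed for the algebra. Monotonicity only motivates the loss-aligned regime, where $s_{\min}^{\mathrm{corr}}$ is large because corrupted points have the largest losses. Throughout I assume $Z > 0$ and $\overline{s}^{\mathrm{clean}} > 0$; otherwise the distribution or the ratio is undefined. I also use $|\Dcorr| = \varepsilon N$ and $|\Dclean| = (1-\varepsilon)N$ from \Cref{assu:contamination}.

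\textbf{Step 1 (first bound).} For the numerator, use $s_i \ge s_{\min}^{\mathrm{corr}}$ for every $i \in \Dcorr$, which gives
\[
\sum_{i\in\Dcorr} s_i \;\ge\; \varepsilon N\, s_{\min}^{\mathrm{corr}} .
\]
For the denominator, use $s_k \le s_{\max}^{\mathrm{full}}$ for all $k$, which gives $Z \le N\, s_{\max}^{\mathrm{full}}$. Dividing the two yields $\sum_{i\in\Dcorr} p_i^s \ge \varepsilon\, s_{\min}^{\mathrm{corr}}/s_{\max}^{\mathrm{full}}$.

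\textbf{Step 2 (ratio bound).} The normalizer $Z$ cancels in the ratio, so
\[
\frac{\sum_{\Dcorr} p_i^s}{\sum_{\Dclean} p_j^s} \;=\; \frac{\sum_{\Dcorr} s_i}{(1-\varepsilon)N\,\overline{s}^{\mathrm{clean}}} .
\]
Bounding the numerator by $\varepsilon N s_{\min}^{\mathrm{corr}}$ as in Step 1 gives $\frac{\varepsilon}{1-\varepsilon}\cdot s_{\min}^{\mathrm{corr}}/\overline{s}^{\mathrm{clean}}$.

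\textbf{Step 3 (loss-aligned corollary).} Substitute the hypothesis $s_{\min}^{\mathrm{corr}} \ge \alpha\, s_{\max}^{\mathrm{full}}$ into Step 1. The ratio $s_{\min}^{\mathrm{corr}}/s_{\max}^{\mathrm{full}}$ is then at least $\alpha$, so the corrupted mass is at least $\alpha\varepsilon$. This bound involves no clean-data quantity, which is what ``irrespective of the clean-data geometry'' means.

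There is no real obstacle here. The only care needed is with the degenerate cases ($Z=0$, or all clean scores zero), which I would exclude explicitly. I would also remark that Step 1's bound is tight when all scores are equal. Finally, I would note that the ratio bound in Step 2 exceeds $\varepsilon/(1-\varepsilon)$, which is the corrupted-to-clean mass ratio under uniform sampling, exactly when $s_{\min}^{\mathrm{corr}} > \overline{s}^{\mathrm{clean}}$. That comparison is what formalizes over-allocation relative to uniform sampling.
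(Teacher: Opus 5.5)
Your proposal is correct and follows the paper's proof essentially line for line. You lower-bound the corrupted score sum by $\varepsilon N s_{\min}^{\mathrm{corr}}$, upper-bound the normalizer by $N s_{\max}^{\mathrm{full}}$, cancel the normalizer in the ratio and write the clean sum as $(1-\varepsilon)N\overline{s}^{\mathrm{clean}}$, and then substitute the loss-aligned hypothesis; your remarks that monotonicity is never used and that degenerate cases ($Z=0$, $\overline{s}^{\mathrm{clean}}=0$) must be excluded are accurate points the paper leaves implicit.
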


The proof is a straightforward computation (\Cref{app:prop_suboptimality}). 
 The result implies that under loss-aligned contamination, no choice of non-decreasing score within this family can avoid an $\Omega(\varepsilon)$ allocation to $\Dcorr$. \Cref{prop:suboptimality} is therefore an impossibility result for magnitude-based IS as a class, rather than being a property of any particular implementation --- and motivates moving to a non-monotone score, as we do in the next section by aggregating proxy ranks via variance rather than mean.

\section{\drisLong{} (\drisShort{})}
\label{sec:method}

Motivated by the separation properties in \Cref{thm:main}, we propose a subsampling method, which uses the \emph{rank-disagreement score} across an ensemble of $K$ cheap proxies trained on $\mathcal{D}$ for $T_{\mathrm{proxy}}$ epochs. By aggregating loss ranks via \eqref{eq:rankvar}, \drisShort{} distinguishes "boundary" clean data, which is highly sensitive to proxy stochasticity --- from the "bulk" corrupted data that remains consistently high-loss.
In the following a static and an online version of this idea is introduced. For better understanding more details with pseudocode is shared in \Cref{app:alg}.

\paragraph{Static Pruning (Main Method)}
Given a keep-fraction $\alpha \in (0,1]$, \drisShort{} selects the $\alpha N$ examples with the highest $\rankvar_{i}$:
\begin{equation*}
   \mathcal{S}_{\drisShort{}} = \mathrm{Top}\bigl(\rankvar_{1},\dots,\rankvar_{N}; \lfloor \alpha N \rfloor \bigr).
\end{equation*}
According to \Cref{thm:main}, this set $\mathcal{S}_{\drisShort{}}$ prioritizes the clean boundary $\Dbdry$ while theoretically excluding the corrupted bulk $\Dcorr^{(1-\alpha_{\text{trim}})}$ whenever the separation gap $\Delta'$ is positive. The target model is trained on $\mathcal{S}_{\drisShort{}}$ for $1/\alpha$ times more epochs, keeping total gradient steps constant relative to full-dataset training.

\paragraph{Online IS (Extension)}
Define the online sampling distribution for unbiased gradient estimation:
\begin{equation}\label{eq:dris_online}
q_{i}^{\mathrm{DR}} \propto \rankvar_{i} + \xi\,\overline{\rankvar}, \quad \text{with } \overline{\rankvar} = \tfrac{1}{N}\textstyle\sum_{j} \rankvar_{j}.
\end{equation}
The smoothing constant $\xi > 0$ is an insensitive tunable hyperparameter (\Cref{app:smoothing}) ensuring scale-invariance. While $q_{i}^{\mathrm{DR}}$ deviates from the variance-minimizing distribution \eqref{eq:optimal_is}, \Cref{prop:suboptimality} identifies this deviation as the necessary mechanism to prevent importance sampling from over-allocating mass to high-loss outliers.

\paragraph{Implementation and Cost}
Proxies are trained directly on $\mathcal{D}$, to prevent information leakage inherent in pre-training on auxiliary clean data (as in \citealp{coleman2020selection}). This cost is amortized: proxies are an order of magnitude smaller than the target model and are trained once. With $K=3$, $T_{\mathrm{proxy}}=40$ proxy-epochs (rank-variance scored
at epoch~$20$), and $10\times$ smaller proxies, the total computational
overhead remains below $10\%$ of the $T_{\mathrm{target}}=160/200$-epoch
CIFAR-10/100 target schedules.

\paragraph{Clean-Data Dynamics}
By design, \drisShort{} favors "boundary-clean" examples over "easy" ones. This boundary-heavy selection proves beneficial when samples near the decision surface carry more unique information (e.g., on CIFAR-100), though it can slightly degrade performance in settings where easy examples are foundational (see \Cref{sec:exp:cifar_pruning}). The online variant (\ref{eq:dris_online}) remains robust, tracking uniform SGD performance within $\sim 1\,$pp on clean data while offering superior stability under contamination (\Cref{sec:exp:online}).

\section{Experiments}
\label{sec:experiments}
We evaluate \drisShort{} on static pruning and online importance sampling (IS) using CIFAR-10/100 and Food-101. While \Cref{thm:main} holds for any hypothesis class, we focus on neural networks as proxies and targets. Hyperparameters are listed in \Cref{app:exp_details}. A codebase for reproducibility is included in the supplementary material and will be released publicly upon publication.
 
\paragraph{Setup.}
\textbf{Datasets.} CIFAR-10 and CIFAR-100 \citep{krizhevsky2009learning}
with standard splits; Food-101 \citep{bossard2014food} as an
external-validity check (\Cref{sec:exp:limitations}).
\textbf{Noise models.}
\emph{(i) Targeted high-norm}: labels of the top-$\nu$ fraction by
gradient norm (under a held-out attacker model) are flipped to a
random other class; full details of the attacker construction are
given in \Cref{app:exp_details}.
\emph{(ii) Uniform symmetric}: a random $\nu$-fraction of labels are
flipped uniformly. Rates $\nu \in \{0,10,25\}\%$; the noise mask is
shared across all methods in a given configuration.
\textbf{Proxy ensemble.} $K=3$ lightweight models
(\{ResNet-20, MobileNetV2$_{0.5\times}$, ShuffleNetV2$_{0.5\times}$\}
on CIFAR; three pretrained backbones fine-tuned on Food-101), each trained on the
possibly-corrupted training set. Ranks $\rho_i(h_k)$ are recorded at
proxy epoch 20, similarly to the EL2N paper \citep{paul2021deep}.
\textbf{Baselines.} For static pruning: \textbf{Random} (control),
\textbf{EL2N}\footnote{GraNd \citep{paul2021deep} substitutes gradient norm for error norm but is otherwise a magnitude-snapshot score (similar to EL2N and Standard IS). It is subject to \Cref{prop:suboptimality}. We omit it as a separate baseline as the other two is reported (see \Cref{tab:online_is}).}, \textbf{Forgetting}
\citep{toneva2019empirical}, \textbf{Consensus-loss} (highest mean
loss-rank across the same $K=3$ proxies, i.e.\ the mean-aggregated
counterpart to \drisShort{}'s variance aggregation), and \textbf{AUM}
\citep{pleiss2020identifying}.
For online IS: \textbf{Uniform SGD}, \textbf{Standard IS}
(gradient-norm), and \textbf{RHO-LOSS} \citep{mindermann2022prioritized}
in two variants (see \Cref{app:rho_loss_breakdown}).
All ensemble baselines (EL2N, Forgetting) are using same $K$ proxies for a fair comparison.
 
\subsection{Static pruning on CIFAR-10 and CIFAR-100}
\label{sec:exp:cifar_pruning}

\textbf{Protocol.}
Using a pruned subset ($\alpha = 0.25$), we train VGG-19-BN (CIFAR-10) and ResNet-18 (CIFAR-100) for $4\times$ the baseline duration ($160/200$ epochs) to maintain total gradient step parity.

\begin{table}[h]
\centering
\caption{Static pruning at keep fraction $\alpha = 0.25$ on CIFAR-10
(top) and CIFAR-100 (bottom), test accuracy (\%, mean$\pm$std over 3
seeds). Targeted noise flips labels of the top-$\nu$ fraction by
gradient norm under a single attacker model. Best values indicated in {\bf bold}, second best \underline{underlined}.}
\label{tab:cifar_pruning}
\small
\setlength{\tabcolsep}{4pt}
\begin{tabular}{lcccc}
\toprule
Method & Clean & Uniform $10\%$ & Targeted $10\%$ & Targeted $25\%$ \\
\midrule
\multicolumn{5}{c}{\textsc{CIFAR-10, VGG-19}} \\
\midrule
Random              & $\mathbf{87.78\pm 0.26}$ & $81.84\pm 0.55$ & \underline{$82.81\pm 0.15$} & $74.66\pm 0.54$ \\
EL2N (ensemble)     & $83.94\pm 0.00$ & $27.53\pm 0.70$ & $48.62\pm 1.23$ & $\phantom{0}9.39\pm 0.61$ \\
Forgetting          & \underline{$87.39\pm 0.58$} & $82.20\pm 0.53$ & $81.99\pm 1.90$ & $72.35\pm 0.90$ \\
Consensus-loss      & $81.00\pm 0.67$ & $28.20\pm 0.95$ & $44.29\pm 2.29$ & $\phantom{0}9.00\pm 1.02$ \\
AUM (top-$\alpha$)  & $80.60\pm 0.31$ & \underline{$82.86\pm 0.33$} & $76.75\pm 6.71$ & \underline{$79.69\pm 0.28$} \\
\drisShort{} (ours) & $87.14\pm 0.32$ & $\mathbf{85.98\pm 0.35}$ & $\mathbf{83.43\pm 1.07}$ & $\mathbf{81.86\pm 0.82}$ \\
\midrule
\multicolumn{5}{c}{\textsc{CIFAR-100, ResNet-18}} \\
\midrule
Random              & \underline{$63.09\pm 0.33$} & $54.81\pm 0.40$ & \underline{$58.76\pm 0.68$} & $51.53\pm 0.59$ \\
EL2N (ensemble)     & $29.51\pm 0.00$ & $\phantom{0}7.43\pm 0.13$ & $12.25\pm 0.58$ & $\phantom{0}1.54\pm 0.19$ \\
Consensus-loss      & $27.92\pm 0.17$ & $\phantom{0}6.19\pm 0.42$ & $10.59\pm 0.64$ & $\phantom{0}1.11\pm 0.04$ \\
AUM (top-$\alpha$)  & $55.32\pm 0.12$ & \underline{$55.14\pm 0.34$} & $54.50\pm 0.11$ & \underline{$53.32\pm 0.14$} \\
\drisShort{} (ours) & $\mathbf{63.65\pm 0.24}$ & $\mathbf{62.02\pm 0.62}$ & $\mathbf{63.15\pm 0.94}$ & $\mathbf{60.96\pm 0.68}$ \\
\bottomrule
\end{tabular}

\end{table}
 
\paragraph{Core results.}
\Cref{tab:cifar_pruning} shows \drisShort{} consistently outperforms Random pruning under noise. We observe gains of $+7.21\,$pp on CIFAR-10 (paired $t=15.5$, $p=4\!\times\!10^{-6}$, $n=7$; \Cref{app:paired_t_headline}) and $+9.43\,$pp on CIFAR-100 under targeted noise. The slight clean-data deficit on CIFAR-10 ($-0.64\,$pp) stems from our boundary-heavy bias; however, on the more complex CIFAR-100, this same bias yields a $+0.56\,$pp gain, suggesting that boundary samples are more informative when data is more complex. Hence the trade-off is unambiguously favorable for \drisShort{}.

 \paragraph{Two Families of Failure.}
\textit{1. Magnitude-Snapshot Collapse:} EL2N and Consensus-loss exhibit catastrophic failure under targeted noise, dropping to near-chance accuracy ($9.39\%$ on CIFAR-10). As predicted by \Cref{prop:suboptimality}, these methods over-allocate to high-loss outliers; \Cref{tab:noise_recovery_inline} confirms their selected subsets are $\sim 90\%$ corrupted.

\textit{2. Dynamics vs. Disagreement:} Both AUM and \drisShort{} succeed, but through different mechanisms. AUM uses \emph{temporal} averaging to achieve near-perfect noise rejection, while \drisShort{} uses \emph{cross-sectional} disagreement. Under $25\%$ noise, \drisShort{} outperforms AUM by $2.17$--$7.64\,$pp. While AUM is empirically precise, \drisShort{} is backed by the uniform separation guarantees and guaranteed contamination reduction of \Cref{thm:main}, whereas existing AUM bounds (\Cref{app:aum-proof}) provide qualitatively different statistical guarantees that do not certify noise exclusion.

\begin{table}[h]
\centering
\caption{Subset contamination diagnostic at keep fraction $\alpha = 0.25$ on CIFAR-10 and CIFAR-100. $\varepsilon=0.25$ targeted high-norm noise is used. The baseline chance level is $0.25$.}
\label{tab:noise_recovery_inline}
\small
\setlength{\tabcolsep}{4pt}
\begin{tabular}{lcccc}
\toprule
& \multicolumn{2}{c}{CIFAR-10 / VGG-19} & \multicolumn{2}{c}{CIFAR-100 / ResNet-18}\\
\cmidrule(lr){2-3}\cmidrule(lr){4-5}
Method & Frac.\ corrupt in subset & Test acc & Frac.\ corrupt in subset & Test acc \\
\midrule
Random              & $0.250$ (chance) & $74.66$ & $0.250$ (chance) & $51.53$ \\
EL2N (ensemble)     & $0.866$ & $\phantom{0}9.39$ & $0.862$ & $\phantom{0}1.54$ \\
Consensus-loss      & $0.871$ & $\phantom{0}9.00$ & $0.895$ & $\phantom{0}1.11$ \\
AUM (top-$\alpha$)  & $\mathbf{0.001}$ & \underline{$79.69$} & $\mathbf{0.000}$ & \underline{$53.32$} \\
\drisShort{} (ours) & \underline{$0.072$} & $\mathbf{81.86}$ & \underline{$0.055$} & $\mathbf{60.96}$ \\
\bottomrule
\end{tabular}
\end{table}

\begin{figure}[h]
\centering
\includegraphics[width=0.95\linewidth]{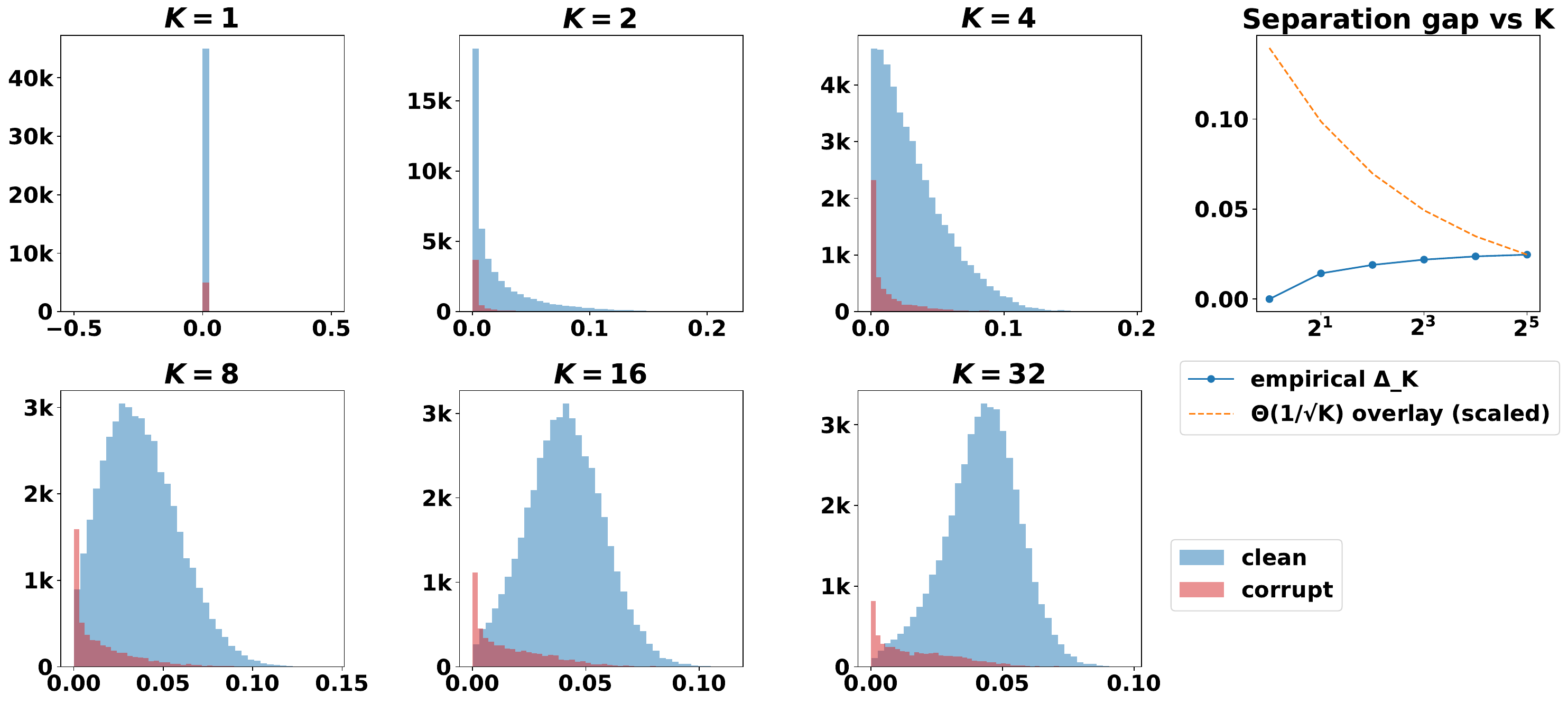}
\caption{\textbf{Left:} $\rankvar$ histograms on $\Dclean$ (blue) and
$\Dcorr$ (red) at $K \in \{1,2,4,8,16,32\}$, CIFAR-10, $10\%$
targeted noise.
\textbf{Right:} empirical gap
$\widehat{\Delta}_K = \overline{\rankvar}|_{\Dclean} -
\overline{\rankvar}|_{\Dcorr}$ (solid) vs.\ $\Theta(K^{-1/2})$
McDiarmid radius (dashed). Empirical separation outpaces the
worst-case bound.}
\label{fig:K_ablation}
\end{figure}
 
\paragraph{Empirical Concentration.} 
\Cref{thm:main} suggests $K$ scales with the gap $\Delta'$. In \Cref{fig:K_ablation}, we visualize $\rankvar$ histograms as $K$ increases. While the worst-case McDiarmid bound tightens slowly, the clean/corrupted distributions become clearly bimodal by $K=8$. Motivating why $K=3$ is sufficient in practice: the "simplicity bias" of proxies is robust enough that empirical separation outpaces our conservative theoretical limits. Additionally for $\alpha=0.25$ fraction separating the mass of corrupted samples from clean is enough, which already noticeable with $K=2$ proxies.


\subsection{Online importance sampling}
\label{sec:exp:online}
\begin{table}[h]
\centering
\caption{
Online importance sampling on CIFAR-10/VGG-19, using 160 epochs. Test accuracy ($\%$, mean$\pm$std over 3 seeds). DR-IS-online compared to uniform and importance-sampling baselines.}
\label{tab:online_is}
\small
\begin{tabular}{lccc}
\toprule
Method & Clean & Targeted 10\% & Targeted 25\% \\
\midrule
Uniform SGD & {\bf 93.55\,$\pm$\,0.26} & \underline{86.29\,$\pm$\,0.42} & \underline{74.52\,$\pm$\,0.28} \\
Standard~IS & 90.81\,$\pm$\,0.11 & 10.00\,$\pm$\,0.00 & 10.00\,$\pm$\,0.00 \\
RHO-LOSS$^{\dagger}$ (approx.)   & 20.21\,$\pm$\,17.68 & 22.78\,$\pm$\,22.14 & 10.00\,$\pm$\,0.00 \\
RHO-LOSS$^{\ddagger}$ (faithful) & 36.32\,$\pm$\,45.59 & 59.99\,$\pm$\,43.31 & 31.30\,$\pm$\,36.89 \\
DR-IS-online (ours) & \underline{92.40\,$\pm$\,0.14} & {\bf 87.87\,$\pm$\,1.79} & {\bf 84.04\,$\pm$\,0.35} \\
\bottomrule
\end{tabular}
\\[3pt]
{\footnotesize $^{\dagger}$Proxy mean loss (no refresh). $^{\ddagger}$\citet{mindermann2022prioritized} protocol: ResNet-20 on 5k holdout; recomputed every 5 epochs. See \Cref{app:rho_loss_breakdown} for details.}
\end{table}

We compare the \drisShort{}-online distribution \eqref{eq:dris_online} against four baselines on CIFAR-10 for 160 epochs (\Cref{tab:online_is}). On clean data, \drisShort{} closely tracks Uniform SGD ($92.40\%$ vs $93.55\%$). However, under $25\%$ targeted noise, \drisShort{} provides a $9.5\,$pp gain over Uniform SGD and avoids the catastrophic collapse of Standard IS ($84.04\%$ vs $10.00\%$). While Standard IS suffers from the \emph{adversarial amplification} proved in \Cref{prop:suboptimality}, RHO-LOSS variants fail due to \emph{sampler-collapse} (high per-seed variance, see \Cref{app:rho_loss_breakdown}). \drisShort{} remains stable because its sampling mass is concentrated on high-disagreement boundary points, effectively bypassing the high-loss corrupted bulk.

\subsection{Scope, Limitations, and Comparison to AUM}
\label{sec:exp:limitations}
 
\paragraph{Food-101: natural noise and the threat-model boundary.}
We evaluate \drisShort{} on Food-101 \citep{bossard2014food} (101 classes; $\sim5\%$ natural noise) to map what happens outside the adversarial threat model. Proxies are $K=3$ ImageNet-pretrained backbones (ResNet-18, MobileNetV3-Small, EfficientNet-B0), linear-probed for 10 epochs; target is ResNet-50 with the last block fine-tuned; $\alpha = 0.5$; three seeds. Two settings: \emph{(E1)} natural noise only; \emph{(E2)} natural + injected $25\%$
targeted noise (\Cref{tab:food101}). 
 
On E1, \drisShort{} matches Random ($83.04\%$ vs $82.67\%$,
$+0.37\,$pp); the mechanism diagnostic shows no active noise filtering. This is the expected behavior: natural label noise is not loss-aligned and does not satisfy \Cref{assu:adv_concentration}.
On E2, \drisShort{} recovers $73.67\%$ vs Random's $70.69\%$ ($+2.98\,$pp), with corruption-in-subset at $0.18$ vs $0.49$ for EL2N. AUM achieves $75.56\%$ on E2 but incurs a $4.93,$pp accuracy penalty on clean data (E1).

\begin{table}[h]
\centering
\caption{Food-101 / ResNet-50 results. \textbf{E1} denotes natural label noise; \textbf{E2} adds 25\% targeted noise. Test accuracy is reported as \% (mean$\pm$std); Noise columns report the fraction of the selected subset that is corrupt (chance baseline is 0.25).}
\label{tab:food101}
\small
\begin{tabular}{lcccc}
\toprule
& \multicolumn{2}{c}{E1 (Natural Noise)} & \multicolumn{2}{c}{E2 (+25\% Targeted)} \\
\cmidrule(lr){2-3} \cmidrule(lr){4-5}
Method & Test Acc. & Est. Noise & Test Acc. & Frac. Corrupt \\
\midrule
Random              & \underline{$82.67 \pm 0.25$} & \underline{$0.25 \pm 0.00$} & $70.69 \pm 0.24$ & $0.25 \pm 0.00$ \\
EL2N (ensemble)     & $74.10 \pm 0.00$ & $0.43 \pm 0.00$ & $35.51 \pm 0.12$ & $0.49 \pm 0.00$ \\
AUM (top-$\alpha$)  & $77.74 \pm 0.00$ & $\mathbf{0.07 \pm 0.00}$ & $\mathbf{75.56 \pm 0.03}$ & $\mathbf{0.01 \pm 0.00}$ \\
\drisShort{} (ours) & $\mathbf{83.04 \pm 0.00}$ & \underline{$0.25 \pm 0.00$} & \underline{$73.67 \pm 0.32$} & \underline{$0.18 \pm 0.00$} \\
\bottomrule
\end{tabular}
\end{table}

\paragraph{Phase Transitions: $\alpha$ and $\varepsilon$.}
Sweeps over keep-fractions and corruption rates (\Cref{app:exp:sweeps}) reveal distinct regimes of dominance. \drisShort{} excels at lower keep-fractions ($\alpha < 0.5$) and moderate noise ($\varepsilon \le 0.35$), where the \emph{separation gap $\Delta'$} is widest. As noise approaches the theoretical breakdown point ($\varepsilon \to 0.5$), the volume of the \emph{escaping tail} $\Dcorr^{\mathrm{tail}}$ grows, and the contamination bound \eqref{eq:contam} loosens. In this high-noise regime, AUM’s temporal averaging provides superior stability, whereas \drisShort{}’s cross-sectional disagreement is more effective for high-fidelity identification at lower contamination levels. Conversely, for large $\alpha \gg 0.5$, the assumption (\Cref{thm:main}(iii)) of a sufficiently rich boundary set breaks down, providing a theoretical explanation for the observed performance degradation.
 
\paragraph{Proxy diversity.}
Homogeneous ensembles (three ResNet-20s) perform similarly to heterogeneous ones (\Cref{app:exp:proxy_diversity}), suggesting that the rank-variance signal is a robust property of the training dynamics rather than specific architectures. We even find that simple MLP or linear proxies can suffice for lower-dimensional datasets (\Cref{app:exp:proxy_type}).
 
\paragraph{Synthetic failure mode.}
On a controlled $\R^{20}$ synthetic benchmark with uniform $10\%$
noise, \drisShort{} fails ($53.86\%$ vs uniform SGD's $86.16\%$). This case (see
\Cref{app:synthetic} and \Cref{fig:K_ablation}) highlights the necessity of \Cref{assu:adv_concentration}: in low-complexity settings, proxies may lack the capacity to "ignore" noise, causing the rank-variance of corrupted points to overlap with clean boundary points. This prevents the strict separation of the corrupted bulk and allows the tail contamination to dominate the selection $S_\alpha$. This failure does not arise on CIFAR/Food-101, where the simplicity bias of limited-capacity proxies empirically satisfies \Cref{assu:adv_concentration}.

\section{Limitations and Discussion}
\label{sec:limitations}

\paragraph{A unified view: from mean rank to rank variance.}
The central insight of this work is that the simplicity bias of SGD proxies provides a more robust signal for importance sampling than gradient magnitude. Standard IS collapses because adversarial outliers exhibit high loss; \Cref{thm:main} establishes that those same outliers exhibit \emph{low cross-sectional variance} across an independent proxy ensemble, while clean boundary examples exhibit high variance. Moving from \emph{mean} rank to \emph{variance} of ranks (\drisShort{}) simultaneously filters the corrupted bulk and identifies the informative clean boundary, forming a bimodal regularity that \Cref{thm:main} certifies non-asymptotically with explicit contamination control that no magnitude-based score can match (\Cref{prop:suboptimality}).

\paragraph{The clean-data cost and its mitigation.}
By design, \drisShort{} prioritizes hard boundary examples over easy clean examples: a minor deficit on simple tasks ($-0.64\,$pp on CIFAR-10) versus a gain on more complex tasks ($+0.56\,$pp on CIFAR-100). This demonstrates that the value of disagreement-based selection scales with \emph{distributional complexity}; the richer the feature space, the more informative the boundary-disagreement signal may become. Under adversarial contamination, gains reach $+9.5\,$pp over uniform SGD at $25\%$ targeted noise. The online variant \eqref{eq:dris_online} reweights rather than discards, tracking uniform SGD within ${\sim}1\,$pp on clean data and is insensitive to the smoothing constant $\xi$. In static settings, boundary bias can be mitigated by mixing a $(1-k)$-fraction of uniform samples, allowing \Cref{thm:main}'s contamination bound to degrade gracefully, though mixing with magnitude-based IS sacrifices noise resilience rapidly for minimal gain (\Cref{app:exp:hybrid}).

\paragraph{Structural assumptions, failure modes, and the breakdown point.}
The robustness of \drisShort{} rests on two core assumptions. \Cref{assu:adv_concentration} requires proxies to consistently penalize corrupted labels that are well matched to loss-aligned adversarial attacks, but not to natural noise where labels may be plausible, explaining the parity on Food-101. \Cref{assu:bdry_disagreement} requires the hypothesis prior to have sufficient spread; otherwise, the disagreement signal collapses. Furthermore, the subset contamination control in \Cref{thm:main}(iii) strictly relies on the existence of sufficient boundary clean data ($|\Dbdry| \ge \alpha N$). If a dataset is simple, such that its boundary is exceedingly sparse, or if the selected subset fraction $\alpha$ is large, the high-disagreement subset $S_\alpha$ will exhaust the clean boundary capacity and admit examples from the escaping corrupted tail. 
When these conditions fail, the method inverts its selection bias, as shown in the $\mathbb{R}^{20}$ synthetic study (\Cref{app:synthetic}). Both conditions are empirically verifiable via the $\rankvar$ histogram (\Cref{fig:K_ablation}) prior to deployment. Finally, the $\varepsilon<1/2$ constraint is precisely Huber's breakdown point \citep{huber1964robust}, a universal limit the method operates within but does not transcend.

\paragraph{Theory--practice gap and practical scope.}
Our $K$-ablation demonstrates the efficiency of this bimodal separation: although worst-case gaps are negative, \Cref{thm:main}'s bulk-separated gap $\Delta'$ remains positive, certifying why \drisShort{} rejects ${>}90\%$ of corruption with just $K=3$ proxies. By targeting high-quantile separation to exclude the corrupted bulk ($\Dcorr^{(1-\alpha_{\text{trim}})}$), the theory provides predictive contamination bounds even when outliers evade the proxy's simplicity bias.

The computational cost is amortized and readily parallelized, achieving a total overhead below 10\% with no auxiliary clean data required. Scores must be recomputed if the dataset changes substantially, and for very large datasets, the prototype score of \citet{sorscher2022beyond} provides an inexpensive pre-filter. Static pruning inevitably discards some clean data, whereas the online variant preserves it at the cost of maintaining sampling weights.

\paragraph{Comparison to AUM.}
Both \drisShort{} and AUM reject adversarial corruption effectively, but through distinct mechanisms: \drisShort{} exploits rank disagreement across independent proxies, while AUM utilises margin trajectories within a single proxy. \Cref{thm:main} provides a uniform finite-sample identification guarantee that is qualitatively distinct from the \emph{pairwise} AUROC bound derivable for AUM (\Cref{app:aum-proof}, \Cref{thm:aum-auroc}). Specifically, our theorem ensures that with probability $1-\delta$, every example in the corrupted bulk ($\Dcorr^{(1-\alpha_{\mathrm{trim}})}$) remains below the separation threshold, from which an explicit contamination bound for the escaping tail follows (\Cref{thm:main}(iii)). This deterministic upper bound on subset noise leakage provides a level of safety that pairwise expectations, which permit individual ordering failures cannot offer. Furthermore, the parameters in \Cref{thm:main} are bounded explicitly via the ensemble size $K$ and empirically estimable rank-statistics, whereas AUM’s discriminative gap $\Delta_0$ remains a posited population quantity. Empirically, \drisShort{} offers improved fidelity at moderate corruption and lower keep-fractions, while AUM provides superior stability under extreme noise ($\varepsilon \ge 0.40$). Given their distinct overhead profiles and orthogonal signals, the two methods constitute complementary tools for robust training.

\paragraph{Concluding remarks.}
\drisLong{} improves the efficiency of importance sampling mitigating the vulnerability to adversarial label corruption by redirecting sampling mass to high-disagreement boundary examples. Natural extensions include feature corruption and covariate shift. Future investigations should address the optimal epoch selection for proxy scoring, the impact of boundary-heavy selection on data bias, the targeted integration of boundary selection with easy-sample retention, and combining the disagreement selection idea with other methods for case-specific advantages.

\begin{ack}
Computations and data handling were enabled by the Berzelius resource provided by the Knut and Alice Wallenberg Foundation at the National Supercomputer Centre through project Berzelius-2026-100, and by the Alvis resource provided by Chalmers e-Commons at Chalmers through project NAISS 2026/3-305 and NAISS 2026/4-73. Csongor Horváth and Prashant Singh acknowledge support from the Swedish Research Council through grant agreement no.~2023-05593. Prashant Singh also acknowledges funding from the Knut and Alice Wallenberg Foundation through the Wallenberg AI, Autonomous Systems and Software Program (WASP) and the Data-Driven Life Science (DDLS) NEST project ``TIMED''. 

The authors thank Aleksandr Karakulev for his valuable feedback on early drafts of the paper.
\end{ack}

\bibliographystyle{plainnat}
\bibliography{references}

\appendix

\section{Proofs}
\label{app:proofs}

\subsection{Proof of \Cref{thm:main}}
\label{app:proof}

\begin{proof}
Fix any $i \in \{1,\dots,N\}$. We first establish the concentration of the empirical rank-disagreement $\rankvar_i$ around its expectation, and then strictly bound this expectation from above for the bulk corrupted points $i \in \Dcorr^{(1-\alpha_\text{trim})}$ and from below for the boundary clean points $j \in \Dbdry$. After that, the separation and subset contamination bounds are shown.

\paragraph{Step 1: bounded-differences concentration.} 
The statistic $\rankvar_i = \rankvar_i(h_1,\dots,h_K)$ is a function of $K$ independent random proxy models drawn from $\mathcal{P}$, with each normalized rank satisfying $\rho_i(h_k) \in [0,1]$. Let $h'_k$ be a replacement for $h_k$, leaving the remaining $K-1$ proxies unchanged. Writing $\rho_m := \rho_i(h_m)$ and $\rho'_k := \rho_i(h'_k)$, we can expand the biased sample variance $\rankvar_i = \frac{1}{K}\sum_m \rho_m^2 - \bar{\rho}^2$. The empirical variance changes by:

$$ \begin{aligned} \rankvar_i - (\rankvar_i)' &= \left( \frac{1}{K}\sum_{m=1}^K \rho_m^2 - \bar{\rho}^2 \right) - \left( \frac{1}{K}\sum_{m \neq k} \rho_m^2 + \frac{1}{K}(\rho'_k)^2 - (\bar{\rho}')^2 \right), \\ &= \frac{\rho_k^2 - (\rho'_k)^2}{K} - (\bar{\rho} + \bar{\rho}')(\bar{\rho} - \bar{\rho}') .\end{aligned} $$

Because $\rho_k, \rho'_k \in [0,1]$, the maximum change in the sample variance for a single bounded variable substitution is exactly $(K-1)/K^2$, which is strictly bounded by $1/K$. Thus, $|\rankvar_i - (\rankvar_i)'| \leq 1/K$ for all $k$.

By McDiarmid's inequality \citep{mcdiarmid1989method}, setting the substitution bounds $c_k = 1/K$ such that $\sum_{k=1}^K c_k^2 = K(1/K^2) = 1/K$, we obtain:
$$ \mathbb{P}\left(\left|\rankvar_i - \mathbb{E}[\rankvar_i]\right| \geq t \right) \leq 2\exp\left(-\frac{2t^2}{\sum c_k^2}\right) = 2\exp\left(-2Kt^2\right). $$

Setting the right-hand side to $\delta/N$ yields $t = \sqrt{\frac{\log(2N/\delta)}{2K}}$. Applying a union bound over all $N$ training examples guarantees that the following global concentration event holds:

\begin{align}\label{eq:star}
    \mathbb{P}\left(\forall i:\; \left|\rankvar_i - \mathbb{E}[\rankvar_i]\right| \leq \sqrt{\frac{\log(2N/\delta)}{2K}}\right) \geq 1 - \delta. 
\end{align}

All subsequent bounds are evaluated conditionally on the deterministic event \Cref{eq:star}.

\paragraph{Step 2: expectation upper bound on the bulk $\mathbf{\Dcorr^{(1-\alpha_\text{trim})}}$} 
For each $k$, $h_k$ is drawn independently from $\mathcal{P}$. The standard identity for the expected sample variance gives:

$$ \mathbb{E}[\rankvar_i] = \left(1 - \frac{1}{K}\right)\text{Var}_{h \sim \mathcal{P}}[\rho_i(h)]. $$

For $i \in\Dcorr^{(1-\alpha_\text{trim})}$, let $X = \rho_i(h)$. \Cref{assu:adv_concentration} states that the event $\mathcal{A} = \{X \geq 1-\tau\}$ occurs with probability $\mathbb{P}(\mathcal{A}) \geq 1-\gamma$. To upper-bound the variance of $X$, we use the fact that the variance of a random variable is the infimum of expected squared distances to any constant $c$. Choosing $c = 1 - \tau/2$, we have:

$$ \text{Var}(X) \leq \mathbb{E}\left[\left(X - \left(1 - \frac{\tau}{2}\right)\right)^2\right]. $$

By the law of total expectation, we split this across $\mathcal{A}$ and its complement $\mathcal{A}^c$:

$$ \mathbb{E}\left[\left(X - \left(1 - \frac{\tau}{2}\right)\right)^2\right] = \mathbb{E}\left[\left(X - \left(1 - \frac{\tau}{2}\right)\right)^2 \;\middle|\; \mathcal{A}\right]\mathbb{P}(\mathcal{A}) + \mathbb{E}\left[\left(X - \left(1 - \frac{\tau}{2}\right)\right)^2 \;\middle|\; \mathcal{A}^c\right]\mathbb{P}(\mathcal{A}^c). $$

Conditioned on $\mathcal{A}$, $X \in [1-\tau, 1]$. The maximum possible distance between $X$ and the midpoint $1-\tau/2$ is exactly $\tau/2$, so the squared distance is strictly bounded by $\tau^2/4$. Conditioned on $\mathcal{A}^c$, $X \in [0, 1-\tau)$. Since both $X$ and $1-\tau/2$ are constrained to the interval $[0,1]$, their absolute difference is strictly less than 1, trivially bounding the squared distance by 1. Therefore:

$$ \text{Var}_{h \sim \mathcal{P}}[\rho_i(h)] \leq \left(\frac{\tau^2}{4}\right)(1-\gamma) + (1)(\gamma) \leq \frac{\tau^2}{4} + \gamma. $$

Substituting this into the expectation identity and applying the concentration bound (\ref{eq:star}), we obtain the simultaneous upper bound for all bulk corrupted points:

$$ \forall i \in \Dcorr^{(1-\alpha_\text{trim})}: \quad \rankvar_i \leq \left(1 - \frac{1}{K}\right)\left(\frac{\tau^2}{4} + \gamma\right) + \sqrt{\frac{\log(2N/\delta)}{2K}} := \theta^*. $$

\paragraph{Step 3: expectation lower bound on $\Dbdry$.}
For any $j \in D_\text{bdry}$, \Cref{assu:bdry_disagreement} enforces a minimum population variance $\text{Var}_{h \sim \mathcal{P}}[\rho_j(h)] \geq \tau_\text{bdry}^2$. Applying the bias-corrected expectation identity gives $\mathbb{E}[\rankvar_j] \geq (1 - 1/K)\tau_\text{bdry}^2$. Combined directly with the lower tail of the concentration bound (\ref{eq:star}), this yields:

$$ \forall j \in \Dbdry: \quad \rankvar_j \geq \left(1 - \frac{1}{K}\right)\tau_\text{bdry}^2 - \sqrt{\frac{\log(2N/\delta)}{2K}}. $$

\paragraph{Step 4: separation and subset contamination bound.} 
Subtracting the pre-concentration upper expectation of $\Dcorr^{(1-\alpha_\text{trim})}$ from the lower expectation of $\Dbdry$ defines the proxy variance gap $\Delta' := (1-1/K)(\tau_\text{bdry}^2 - \tau^2/4 - \gamma)$. Strict separation occurs when the lowest empirical variance in the boundary set exceeds the highest empirical variance in the bulk corrupted set. By the bounds derived in Steps 2 and 3, this is guaranteed when the expectation gap outpaces the maximum bidirectional concentration error:

$$ \Delta' > 2\sqrt{\frac{\log(2N/\delta)}{2K}}. $$

When this holds, $\theta^*$ acts as a strict, non-overlapping threshold separating $\Dbdry$ entirely from $D_\text{corr}^{(1-\alpha_\text{trim})}$.

We now bound the contamination rate in $S_\alpha$, the subset of $\alpha N$ examples with the highest empirical disagreement. Under strict separation, every $j\in\Dbdry$ satisfies $\rankvar_j > \theta^*$ by part~(ii). The additional hypothesis $|\Dbdry|\ge\alpha N$ guarantees that the $\alpha N$-th largest $\rankvar$ value exceeds $\theta^*$, so $S_\alpha$ is drawn exclusively from examples above $\theta^*$. Because event \eqref{eq:bulk} holds globally, no bulk corrupted point can exceed $\theta^*$; thus, the only corrupted examples capable of entering $S_\alpha$ belong to the escaping tail $\Dcorr^{\text{tail}}$.
For any tail point $i \in \Dcorr^{\text{tail}}$ to successfully exceed the separation threshold ($\rankvar_i > \theta^*$), its pre-concentration expectation must satisfy:

\begin{equation*}
\E[\rankvar_i] \;>\; \theta^* - \sqrt{\frac{\log(2N/\delta)}{2K}} \;=\; \Bigl(1 - \tfrac{1}{K}\Bigr)\Bigl(\frac{\tau^2}{4} + \gamma\Bigr).
\end{equation*}

Applying the bias-corrected variance identity $\E[\rankvar_i] = (1-1/K)\Var_{h\sim P}[\rho_i(h)]$, we see that a tail point can only survive the threshold if its underlying population variance satisfies:
\begin{equation} \label{eq:tail_survival_condition}
\Var_{h\sim P}[\rho_i(h)] > \frac{\tau^2}{4} + \gamma.
\end{equation}

We bound the number of tail points satisfying \eqref{eq:tail_survival_condition} using Markov's inequality. Let $Z$ be a random variable representing $\Var_{h\sim P}[\rho_I(h)]$ for an index $I$ drawn uniformly at random from $\Dcorr^{\text{tail}}$. By \Cref{assu:proxy_sub_gaus}, the expectation of $Z$ over this uniform draw is bounded by $v_{\text{tail}}$. Because variance is strictly non-negative, Markov's inequality gives:
\begin{equation*}
\Prob_{I \sim U(\Dcorr^{\text{tail}})}\Bigl(Z > \frac{\tau^2}{4} + \gamma\Bigr) \;\le\; \frac{\E[Z]}{\tau^2/4 + \gamma} \;\le\; \frac{v_{\text{tail}}}{\tau^2/4 + \gamma}.
\end{equation*}
This ratio (capped at 1) represents the absolute maximum fraction of the tail that can systematically clear the threshold $\theta^*$. The maximum raw number of corrupted examples in $S_\alpha$ is therefore the size of the tail multiplied by this surviving fraction:
\begin{equation*}
|\Dcorr \cap S_\alpha| \le |\Dcorr^{\text{tail}}| \cdot \min\left(1, \frac{v_{\text{tail}}}{\tau^2/4 + \gamma}\right) = \alpha_{\text{trim}} \varepsilon N \cdot \min\left(1,\frac{v_{\text{tail}}}{\tau^2/4 + \gamma}\right).
\end{equation*}
Dividing by the capacity of the selected subset $|S_\alpha| = \alpha N$, we obtain the final guaranteed upper bound on the contamination rate:
\begin{equation*}
\frac{|\Dcorr \cap S_\alpha|}{|S_\alpha|} \;\le\; \frac{\alpha_{\text{trim}} \cdot \varepsilon}{\alpha} \cdot \min\left(1, \frac{v_{\text{tail}}}{\tau^2/4 + \gamma}\right).
\end{equation*}
This concludes the proof.
\end{proof}

\subsection{Proof of \Cref{prop:suboptimality}}
\label{app:prop_suboptimality}

\begin{proof}
For any non-negative score $s$, the IS distribution is
$p_{i}^{s} = s_{i}/\sum_{k} s_{k}$. The total mass on the corrupted set
is:
\begin{align*}
   \sum_{i \in \Dcorr} p_{i}^{s}
   &= \frac{\sum_{i \in \Dcorr} s_{i}}{\sum_{k=1}^{N} s_{k}}
   \;\ge\; \frac{|\Dcorr|\,s_{\min}^{\mathrm{corr}}}
                {N\,s_{\max}^{\mathrm{full}}}
   \;\ge\; \varepsilon\,\frac{s_{\min}^{\mathrm{corr}}}
                       {s_{\max}^{\mathrm{full}}}.
\end{align*}
For the ratio, the denominator is equal to
$|\Dclean|\cdot \overline{s}^{\mathrm{clean}} =
(1-\varepsilon)N \cdot \overline{s}^{\mathrm{clean}}$, while the
numerator is bounded below by
$|\Dcorr|\,s_{\min}^{\mathrm{corr}} = \varepsilon N \cdot
s_{\min}^{\mathrm{corr}}$:
\begin{equation*}
   \frac{\sum_{i \in \Dcorr} p_{i}^{s}}{\sum_{i \in \Dclean} p_{i}^{s}}
   = \frac{\sum_{i \in \Dcorr} s_{i}}{\sum_{i \in \Dclean} s_{i}}
   \;\ge\;
   \frac{\varepsilon}{1-\varepsilon}\cdot
   \frac{s_{\min}^{\mathrm{corr}}}{\overline{s}^{\mathrm{clean}}}.
\end{equation*}
The loss-aligned contamination condition $s_{\min}^{\mathrm{corr}} \ge \alpha \cdot s_{\max}^{\mathrm{full}}$ is stated directly as a hypothesis of the theorem. Substituting into the first displayed inequality immediately yields $\sum_{i \in \Dcorr} p_i^s \ge \alpha\varepsilon$.
\end{proof}

\subsection{An AUROC Identification Argument for AUM}
\label{app:aum-proof}

Pleiss et al.~\cite{pleiss2020identifying} motivate AUM through the simplicity bias of SGD and provide intuitive justification for why clean and mislabeled samples should accumulate different margins; we are not aware of a finite-sample identification guarantee in the published literature analogous to \Cref{thm:main}. We sketch one candidate analysis here under stylized assumptions, both to clarify the relationship to \Cref{thm:main} and to make precise the qualitative difference between AUM-style training-dynamics scoring and \drisShort{}-style ensemble disagreement. The result is a \emph{pairwise AUROC bound}, qualitatively weaker than the subset-level finite-sample guarantees that \Cref{thm:main} provides for \drisShort{}.

\paragraph{Setup.}
Fix a training set $D = \Dclean \cup \Dcorr$. SGD on
$D$ produces a random trajectory
$\Theta = (\theta^{(1)}, \dots, \theta^{(T)})$, with randomness in
initialization and minibatch sampling. For a labeled example $(x, y)$,
the per-step margin is:
\[
M^{(t)}(x, y; \theta) := z_{\theta, y}(x) - \max_{k \neq y} z_{\theta, k}(x),
\]
and the trajectory-averaged AUM is:
\[
A_T(x, y; \Theta) := \frac{1}{T} \sum_{t=1}^{T}
  M^{(t)}\!\bigl(x, y; \theta^{(t)}\bigr).
\]
Define the population AUM:
\[
Z(x, y) := \mathbb{E}_\Theta\!\left[A_T(x, y; \Theta)\right],
\]
a deterministic function of $(x, y)$ once $D$ is fixed, and the residual
$\varepsilon(x, y; \Theta) := A_T(x, y; \Theta) - Z(x, y)$, which satisfies
$\mathbb{E}_\Theta[\varepsilon(x, y; \Theta)] = 0$ for every $(x, y)$.

Let $I \sim \mathrm{Unif}(\Dclean)$ and
$J \sim \mathrm{Unif}(\Dcorr)$ be drawn independently of each
other and of $\Theta$. Throughout we work conditionally on $D$, so
$I \perp J \perp \Theta$.

\subsubsection{Assumptions}

\begin{assumption}[Population gap]\label{ass:aum-gap}
Writing $\mu_c := \mathbb{E}_I[Z(x_I, y_I)]$ and
$\mu_w := \mathbb{E}_J[Z(x_J, y_J)]$, there exists $\Delta_0 > 0$ such that
$\mu_c - \mu_w \geq \Delta_0$.
\end{assumption}

This is the empirically-motivated input. Pleiss
et al.~\cite{pleiss2020identifying} justify it heuristically via the simplicity
bias of SGD; a per-step lower bound on the difference of expected margin
updates between clean and corrupted examples (e.g., as a consequence of
clean/corrupted gradient alignment under cross-entropy near
initialization) telescopes to give such a $\Delta_0$. We treat
\Cref{ass:aum-gap} as a posited input.

\begin{assumption}[Sub-Gaussian population spread]\label{ass:aum-spread}
Both $Z(x_I, y_I) - \mu_c$ under $I \sim \mathrm{Unif}(\Dclean)$
and $Z(x_J, y_J) - \mu_w$ under $J \sim \mathrm{Unif}(\Dcorr)$
are $\sigma$-sub-Gaussian.
\end{assumption}

\begin{assumption}[Trajectory concentration of pair-differences]\label{ass:aum-traj}
There exists $\nu(T)$ with $\nu(T) \to 0$ as $T \to \infty$ such that for
every fixed pair $(i, j)$ with $i \in \Dclean$,
$j \in \Dcorr$, the random variable
$\varepsilon(x_i, y_i; \Theta) - \varepsilon(x_j, y_j; \Theta)$ is
$\nu(T)$-sub-Gaussian as a function of $\Theta$.
\end{assumption}

\Cref{ass:aum-traj} is a joint statement about pair-differences,
weaker than independent marginal sub-Gaussianity of each residual --- the
trajectory acts on both points, so the residuals are not independent.
Under (i) bounded margins $|M^{(t)}| \leq B$ (e.g., via softmax
composition with weight decay or gradient clipping) and (ii) uniform
mixing of the SGD chain with mixing time $\tau_{\mathrm{mix}}$, Paulin's
Markov-chain Hoeffding inequality~\cite{paulin2015concentration} applied
to $f(\theta) := M(x_i, y_i; \theta) - M(x_j, y_j; \theta)$ and bounded by
$2B$, yields $\nu(T) = 2B \sqrt{2 \tau_{\mathrm{mix}} / T}$. The
mixing-time hypothesis is the core effective component and is most defensible in
the regularized constant-step regime studied
by~\cite{dieuleveut2020bridging,raginsky2017non}.

\subsubsection{AUROC bound}

\begin{theorem}[AUROC bound for AUM]\label{thm:aum-auroc}
Under
Assumptions~\ref{ass:aum-gap}--\ref{ass:aum-traj},
\[
\mathbb{P}\!\left(A_T(x_I, y_I; \Theta) > A_T(x_J, y_J; \Theta)\right)
\;\geq\; 1 - \exp\!\left(-\frac{\Delta_0^2}{4\sigma^2 + 2\nu^2(T)}\right).
\]
\end{theorem}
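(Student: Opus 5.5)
Our plan is to write the event $\{A_T(x_I,y_I;\Theta) \le A_T(x_J,y_J;\Theta)\}$ as a deviation of a single scalar random variable below its mean, and then apply a Chernoff bound. Define
\[
D := A_T(x_I,y_I;\Theta) - A_T(x_J,y_J;\Theta) = \bigl(Z(x_I,y_I)-Z(x_J,y_J)\bigr) + \bigl(\varepsilon(x_I,y_I;\Theta)-\varepsilon(x_J,y_J;\Theta)\bigr).
\]
We split $D - (\mu_c-\mu_w)$ into three centered pieces:
\[
U := Z(x_I,y_I)-\mu_c,\qquad V := -(Z(x_J,y_J)-\mu_w),\qquad W := \varepsilon(x_I,y_I;\Theta)-\varepsilon(x_J,y_J;\Theta).
\]
Then $D = (\mu_c-\mu_w) + U + V + W$. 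By Assumption~\ref{ass:aum-gap}, the failure event $\{D\le 0\}$ is contained in $\{U+V+W \le -\Delta_0\}$.

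The key step is to bound the moment generating function of $U+V+W$ by conditioning. Since $I\perp J\perp\Theta$, we condition on $(I,J)=(i,j)$. Given this pair, $U$ and $V$ are constants. $W$ is a function of $\Theta$ alone, and it is centered: each residual has mean zero over $\Theta$ for every fixed $(x,y)$. By Assumption~\ref{ass:aum-traj}, $W$ is conditionally $\nu(T)$-sub-Gaussian, so $\E[e^{-\lambda W}\mid I=i,J=j]\le e^{\lambda^2\nu^2/2}$ uniformly over pairs. This uniformity is the reason the assumption is stated pairwise. Taking the outer expectation and using $I\perp J$ together with Assumption~\ref{ass:aum-spread}, we get
\[
\E\bigl[e^{-\lambda(U+V+W)}\bigr]\le e^{\lambda^2\sigma^2/2}\,e^{\lambda^2\sigma^2/2}\,e^{\lambda^2\nu^2/2}.
\]
So $U+V+W$ is sub-Gaussian with variance proxy $2\sigma^2+\nu^2$.

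The Chernoff bound then gives
\[
\Prob(D\le 0)\le \exp\!\Bigl(-\frac{\Delta_0^2}{2(2\sigma^2+\nu^2(T))}\Bigr)=\exp\!\Bigl(-\frac{\Delta_0^2}{4\sigma^2+2\nu^2(T)}\Bigr),
\]
which is exactly the stated constant. Taking complements yields the strict inequality version, since we bounded $\Prob(D\le0)$.

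The main obstacle is the conditioning step. We must check that the conditional mean of $W$ given $(I,J)$ is zero, and that the sub-Gaussian constant does not depend on the pair. The first follows from $\E_\Theta[\varepsilon(x,y;\Theta)]=0$ for fixed $(x,y)$ together with the independence of $\Theta$ from $(I,J)$. The second is exactly how Assumption~\ref{ass:aum-traj} is phrased. We also fix the convention that $X$ is $s$-sub-Gaussian when $\E e^{\lambda X}\le e^{\lambda^2 s^2/2}$ for all real $\lambda$; this symmetric form covers the sign flip in $V$.
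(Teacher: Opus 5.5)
Your proposal is correct and follows essentially the same route as the paper. The paper writes the difference as $S+N$, with $S = Z(x_I,y_I)-Z(x_J,y_J)$ (your $U+V$, given variance proxy $2\sigma^2$ via $I \perp J$) and $N$ the residual difference (your $W$). It bounds the conditional MGF of $N$ given $(I,J)$ by Assumption~\ref{ass:aum-traj}, combines the two via the tower property to get proxy $2\sigma^2+\nu^2(T)$, and finishes with a Chernoff bound at deviation $\Delta_0$; your centering at $\mu_c-\mu_w$ is the same as the paper's centering at $\mathbb{E}[W]=\mathbb{E}[S]$.
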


\begin{proof}
Let $W := A_T(x_I, y_I; \Theta) - A_T(x_J, y_J; \Theta)$ and decompose
$W = S + N$ with:
\[
S := Z(x_I, y_I) - Z(x_J, y_J),
\qquad
N := \varepsilon(x_I, y_I; \Theta) - \varepsilon(x_J, y_J; \Theta).
\]
$S$ is $\sigma((I, J))$-measurable; $N$ depends on $(I, J, \Theta)$.

\textbf{Step 1 (mean).} By independence of $I$ and $J$,
$\mathbb{E}[S] = \mu_c - \mu_w \geq \Delta_0$. By definition of
$\varepsilon$, $\mathbb{E}[N \mid I, J] = 0$ a.s., so $\mathbb{E}[N] = 0$
and $\mathbb{E}[W] \geq \Delta_0$.

\textbf{Step 2 (sub-Gaussianity of $S$).} The terms
$Z(x_I, y_I) - \mu_c$ and $Z(x_J, y_J) - \mu_w$ are independent
($I \perp J$) and each $\sigma$-sub-Gaussian by
\Cref{ass:aum-spread}. Hence,
\[
\mathbb{E}\!\left[e^{\lambda(S - \mathbb{E}[S])}\right]
  = \mathbb{E}\!\left[e^{\lambda(Z(x_I, y_I) - \mu_c)}\right] \cdot
    \mathbb{E}\!\left[e^{-\lambda(Z(x_J, y_J) - \mu_w)}\right]
  \leq e^{\lambda^2 \sigma^2 / 2} \cdot e^{\lambda^2 \sigma^2 / 2}
  = e^{\lambda^2 \sigma^2},
\]
i.e., $S - \mathbb{E}[S]$ has variance proxy $2\sigma^2$.

\textbf{Step 3 (conditional sub-Gaussianity of $N$).} By
\Cref{ass:aum-traj}, conditional on $(I, J)$,
\[
\mathbb{E}\!\left[e^{\lambda N} \,\big|\, I, J\right]
  \leq e^{\lambda^2 \nu^2(T) / 2} \quad \text{a.s.}
\]

\textbf{Step 4 (assembly via the tower property).} Since $S$ is
$\sigma((I, J))$-measurable, the inner conditional MGF pulls
$e^{\lambda(S - \mathbb{E}[S])}$ outside cleanly:
\begin{align*}
\mathbb{E}\!\left[e^{\lambda(W - \mathbb{E}[W])}\right]
&= \mathbb{E}\!\left[e^{\lambda(S - \mathbb{E}[S])} \cdot
   \mathbb{E}\!\left[e^{\lambda N} \,\big|\, I, J\right]\right] \\
&\leq e^{\lambda^2 \nu^2(T) / 2} \cdot
   \mathbb{E}\!\left[e^{\lambda(S - \mathbb{E}[S])}\right] \\
&\leq e^{\lambda^2 \nu^2(T) / 2} \cdot e^{\lambda^2 \sigma^2}
  = \exp\!\left(\frac{\lambda^2 (2\sigma^2 + \nu^2(T))}{2}\right).
\end{align*}
So $W - \mathbb{E}[W]$ has variance proxy $2\sigma^2 + \nu^2(T)$.
Crucially, independence of $S$ and $N$ is \emph{not} required; only the
conditional MGF bound for $N$ given $(I, J)$ is used.

\textbf{Step 5 (tail bound).} Since $\mathbb{E}[W] \geq \Delta_0$, the
event $\{W \leq 0\}$ is contained in
$\{W - \mathbb{E}[W] \leq -\Delta_0\}$, therefore,
\[
\mathbb{P}(W \leq 0)
  \leq \exp\!\left(-\frac{\Delta_0^2}{2 (2\sigma^2 + \nu^2(T))}\right)
  = \exp\!\left(-\frac{\Delta_0^2}{4\sigma^2 + 2\nu^2(T)}\right).
\qedhere
\]
\end{proof}

\paragraph{Fallback under marginal-only sub-Gaussianity.}
If only the marginal residuals $\varepsilon(x_i, y_i; \Theta)$ and
$\varepsilon(x_j, y_j; \Theta)$ are individually $\nu(T)$-sub-Gaussian
(with no joint claim on the difference), the triangle inequality on the
sub-Gaussian Orlicz norm gives
$\|\varepsilon(x_i, y_i; \cdot) - \varepsilon(x_j, y_j; \cdot)\|_{\psi_2}
  \leq 2\nu(T)$ conditional on $(I, J)$. The argument above then yields:
\[
\mathbb{P}\!\left(A_T(x_I, y_I; \Theta) > A_T(x_J, y_J; \Theta)\right)
  \;\geq\; 1 - \exp\!\left(-\frac{\Delta_0^2}{4\sigma^2 + 8\nu^2(T)}\right),
\]
losing a factor of four in the noise term but converging to
$\mathrm{AUROC} \to 1$ at the same rate as $T \to \infty$.

\paragraph{Comparison with \Cref{thm:main}.}
\Cref{thm:aum-auroc} is qualitatively weaker than
\Cref{thm:main} along two axes. (i) \emph{Type of
statement:} it is a pairwise AUROC bound averaged over independent draws
from $\Dclean \times \Dcorr$, whereas
\Cref{thm:main} provides uniform one-sided bounds that, with probability
$1-\delta$, constrain the rank disagreement of \emph{every} bulk corrupted
point from above and of \emph{every} boundary-clean point from below;
when $\Delta'>0$ and $|\Dbdry|\ge\alpha N$ these simultaneously imply
separation of the two groups. A high AUROC permits
individual ordering failures --- a single corrupted point may rank above
many clean points without violating the bound, which the uniform
bound rules out when its separation condition holds. (ii) \emph{Verifiability of the gap.}
$\Delta_0$ in \Cref{ass:aum-gap} is a population-level statistic over
SGD trajectories, hard to estimate empirically without running many
trajectories. In \Cref{thm:main} the analogous quantities $\tau$ and
$\tau_{\mathrm{bdry}}$ are statistics of the proxy ensemble and are
empirically estimated in \Cref{app:exp:proxy_diversity}. We note that the resulting nominal gap $\Delta$ is negative in our CIFAR setup under the worst-case parameterization ($\widehat\tau^2_{\mathrm{corr,max}}$ dominates $\widehat\tau^2_{\mathrm{bdry,median}}$), so \Cref{thm:main}'s sufficient separation condition is not certified in this regime regardless of $K$; the empirical success is explained separately in \S\ref{sec:limitations}. Nevertheless the control parameter $K$ remains directly observable and cheaply scalable.
Deriving $\Delta_0$ from primitive assumptions on SGD dynamics under
contamination remains open.

\section{Algorithm Pseudocode}
\label{app:alg}

\begin{algorithm}[t]
\caption{\drisShort{} Static Pruning and Training}\label{alg:dris_static_scientific}
\begin{algorithmic}[1]
\Require 
    Training set $\mathcal{D}=\{(x_i,y_i)\}_{i=1}^N$, 
    proxy ensemble size $K$, 
    keep fraction $\alpha$, 
    target epochs $E$, 
    learning rate $\eta$
\Ensure Optimized target parameters $w$

\Statex \texttt{// Phase 1: Robust Scoring}
\For{$k=1$ \textbf{to} $K$}
    \State $h_k \gets \mathrm{TrainProxy}(\mathcal{D})$ \Comment{Train cheap proxies for $T_{\text{proxy}}$ epochs}
    \State $\mathcal{L}_k \gets \{\ell(h_k(x_i), y_i)\}_{i=1}^N$ 
    \State $\mathbf{R}_{:, k} \gets \mathrm{GetNormalizedRanks}(\mathcal{L}_k)$ \Comment{Normalized ranks in $(0, 1]$}
\EndFor
\For{$i = 1$ \textbf{to} $N$}
    \State $\rankvar_i \gets \mathrm{Var}(\mathbf{R}_{i, :})$ \Comment{Rank-disagreement via Eq.~\eqref{eq:rankvar}}
\EndFor

\Statex \texttt{// Phase 2: Subset Selection}
\State $\mathcal{S} \gets \text{TopIndices}(\{\rankvar_1, \dots, \rankvar_N\}, \alpha N)$
\State $\mathcal{D}_{\text{sub}} \gets \mathcal{D}[\mathcal{S}]$ \Comment{Boundary-heavy robust subset}

\Statex \texttt{// Phase 3: Target Model Training}
\State Initialize target model $w$
\For{epoch $e=1$ \textbf{to} $E/\alpha$} \Comment{Step parity via $1/\alpha$ epoch scaling}
    \State Sample mini-batch $B \subset \mathcal{D}_{\text{sub}}$
    \State $w \gets w - \eta \cdot \mathrm{OptimizerStep}(\nabla_w \mathcal{L}(w; B))$
\EndFor
\State \Return $w$
\end{algorithmic}
\end{algorithm}

This section characterizes the functional components and hyperparameters used in Algorithms \ref{alg:dris_static_scientific} and \ref{alg:dris_online_scientific}.

\paragraph{Core Functions}
\begin{itemize}[leftmargin=*, noitemsep]
    \item \texttt{TrainProxy}$(\mathcal{D})$: Generates an independent hypothesis $h \sim \mathcal{P}$ by training a low-capacity model on $\mathcal{D}$ for $T_{\mathrm{proxy}}$ epochs. Stochasticity is induced via random initialization and mini-batch shuffling.
    \item \texttt{GetNormalizedRanks}$(\mathcal{L})$: Maps a loss vector $\mathcal{L} \in \mathbb{R}^N$ to percentiles $\rho \in (0, 1]$. This scale-invariant transformation ensures that the disagreement signal reflects relative sample difficulty rather than absolute loss magnitude.
    \item \texttt{TopIndices}$(\mathcal{V}, k)$: Returns the set of indices corresponding to the $k$ largest elements in $\mathcal{V}$, isolating the high-disagreement boundary.
    \item \texttt{OptimizerStep}$(g)$: Encapsulates the parameter update logic (e.g., SGD with momentum or Adam). In the online setting, this function is applied to the importance-weighted gradient to update the target weights $w$.
\end{itemize}

\paragraph{Static Pruning Mechanics} 
Algorithm \ref{alg:dris_static_scientific} constructs a robust subset $\mathcal{S}$ by selecting the top $\alpha N$ examples according to $\rankvar_i$. To maintain "step parity" with full-dataset training, we scale the target epochs by $1/\alpha$. This ensures the model receives a constant total number of gradient updates, concentrating the optimization budget on informative boundary samples while avoiding the memorization of consistent high-loss noise.

\paragraph{Online IS Reweighting}
Algorithm \ref{alg:dris_online_scientific} uses the full dataset but samples according to the distribution $q_i$. To preserve the objective, the gradient is re-weighted by $\omega_i = (N q_i)^{-1}$, ensuring an unbiased estimator of the empirical risk. The smoothing parameter $\xi$ prevents the sampling mass of low-disagreement samples from reaching zero, which stabilizes the variance of the importance weights and maintains standard SGD convergence properties.

\paragraph{General Parameter Guidelines}
\begin{itemize}[leftmargin=*, noitemsep]
    \item \textbf{Ensemble Size ($K$):} Typically $3$ to $16$; lower values minimize pre-computation overhead, while higher values provide tighter concentration for the separation threshold in \Cref{thm:main}.
    \item \textbf{Keep Fraction ($\alpha$):} Controls the pruning aggressively; lower values focus purely on the most sensitive boundary data.
    \item \textbf{Smoothing ($\xi$):} An empirically insensitive constant (often $0.1$) used to prevent high-variance gradient steps by ensuring a baseline sampling probability $\propto \overline{\rankvar}$.
    \item \textbf{Computational Budget:} By utilizing small proxy architectures and short $T_{\mathrm{proxy}}$, the total pre-processing overhead is consistently $\le 10\%$ of the target training time.
\end{itemize}

\begin{algorithm}[t]
\caption{\drisShort{} Online Importance Sampling Training}\label{alg:dris_online_scientific}
\begin{algorithmic}[1]
\Require 
    Training set $\mathcal{D}=\{(x_i,y_i)\}_{i=1}^N$, 
    proxy architectures $\{a_k\}_{k=1}^K$, 
    proxy scoring epoch $s_p$, 
    target epochs $E$, batch size $B$, 
    smoothing $\xi$, learning rate $\eta$
\Ensure Optimized target parameters $w$

\Statex \texttt{// Phase 1: Pre-computation of Disagreement Scores}
\For{$k=1$ \textbf{to} $K$}
    \State $h_k \gets \mathrm{TrainProxy}(a_k, \mathcal{D})$ \Comment{Train until epoch $s_p$}
    \State $\mathcal{L}_k \gets \{\ell(h_k(x_i), y_i)\}_{i=1}^N$ \Comment{Compute per-sample losses}
    \State $\mathbf{R}_{k,:} \gets \mathrm{GetNormalizedRanks}(\mathcal{L}_k)$ \Comment{Ranks in $(0, 1]$}
\EndFor
\For{$i=1$ \textbf{to} $N$}
    \State $\rankvar_i \gets \mathrm{Var}_{k \in \{1 \dots K\}}(\mathbf{R}_{k,i})$ \Comment{Equation \eqref{eq:rankvar}}
\EndFor

\Statex \texttt{// Phase 2: Importance Sampling Distribution}
\State $\overline{\rankvar} \gets \frac{1}{N}\sum_{j=1}^N \rankvar_j$
\State $q_i \gets \frac{\rankvar_i + \xi \overline{\rankvar}}{\sum_{j=1}^N (\rankvar_j + \xi \overline{\rankvar})}$ for all $i \in \{1, \dots, N\}$
\State $\zeta_i \gets (N q_i)^{-1}$ for all $i \in \{1, \dots, N\}$ \Comment{Unbiasing weights}

\Statex \texttt{// Phase 3: Target Model Optimization}
\State Initialize target model $w$ and scheduler
\For{epoch $e=1$ \textbf{to} $E$}
    \State Sample $N$ indices $\mathcal{I}$ according to distribution $q$
    \State Partition $\mathcal{I}$ into $N/B$ minibatches $\mathcal{B}$
    \For{each minibatch $\mathcal{B}$}
        \State $g_{\mathcal{B}} \gets \frac{1}{|\mathcal{B}|} \sum_{i \in \mathcal{B}} \zeta_i \nabla_w \ell(w; x_i, y_i)$ \Comment{Importance-weighted gradient}
        \State $w \gets w - \eta \cdot \mathrm{OptimizerStep}(g_{\mathcal{B}})$
    \EndFor
    \State Update learning rate via scheduler
\EndFor
\State \Return $w$
\end{algorithmic}
\end{algorithm}

\section{Experimental Details}
\label{app:exp_details}
 
\subsection{Datasets and noise injection}
 
CIFAR-10 and CIFAR-100 \citep{krizhevsky2009learning} use the standard
$50{,}000$/$10{,}000$ train/test splits; Food-101 \citep{bossard2014food}
uses the standard $750$/$250$ split.
 
\textbf{Targeted high-norm noise.} An attacker model (ResNet-20 for
CIFAR-10/100 trained $20$ epochs; ResNet-18 briefly for Food-101) is
trained on clean data. The top-$\nu$ fraction of samples by gradient
norm under this model have their labels flipped to a uniformly-chosen
other class. The attacker is used \emph{only} to construct the noise
mask; \drisShort{}, all baselines, and all proxies see only the corrupted
dataset.
 
\textbf{Uniform symmetric noise.} A random $\nu$-fraction of labels is
flipped to a uniformly-chosen other class. The noise mask is fixed
across all methods in a given configuration.
 
\subsection{Proxy ensemble details}
\label{app:proxy_details}
 
\textbf{CIFAR-10/100.} $K=3$ models from
\{ResNet-20, MobileNetV2$_{0.5\times}$, ShuffleNetV2$_{0.5\times}$\},
each trained for $40$ epochs, batch $128$, SGD with momentum $0.9$,
weight decay $5\!\times\!10^{-4}$, cosine learning rate
\citep{loshchilov2017sgdr} initialized at $0.05$, mixed precision.
Per-sample loss for the rank-disagreement score is recorded at
proxy epoch $20$ (EL2N protocol \citep{paul2021deep}); ranks
$\rho_i(h_k)$ are computed from this single snapshot. Training
continues to epoch $40$ so that the per-epoch margin and correctness
trajectories are available to the AUM and Forgetting baselines that
share the same proxy run.
 
\textbf{Food-101.} $K=3$ ImageNet-pretrained backbones (ResNet-18,
MobileNetV3-Small, EfficientNet-B0). To avoid the I/O bottleneck of
$224\!\times\!224$ on-the-fly proxy training, we extract penultimate
features once per backbone (single GPU-saturated forward pass at
batch~$256$) and train a linear classifier on the cached features
for $6$ epochs at batch~$4096$ in GPU memory, learning rate $0.01$,
SGD with momentum $0.9$, weight decay $1\!\times\!10^{-4}$, cosine
schedule. Per-sample loss is recorded at proxy epoch~$5$.
 
\subsection{Target training}
\label{app:target_training}
 
\textbf{CIFAR-10.} VGG-19-BN \citep{simonyan2015very}, randomly
initialized, SGD with momentum $0.9$, weight decay $5\!\times\!10^{-4}$,
cosine schedule, base learning rate $0.1$, batch $128$, mixed precision.
At $\alpha=0.25$ the pruned target trains for $640$ epochs of the
pruned subset (equivalent to $160$ full-data epochs).
 
\textbf{CIFAR-100.} ResNet-18 \citep{he2016deep} adapted for $32\!\times
\!32$ inputs ($3\!\times\!3$ first convolution, no max-pool); same
optimizer as CIFAR-10, equivalent of $200$ full-data epochs.
 
\textbf{Food-101.} ResNet-50 \citep{he2016deep} from ImageNet-pretrained
weights with the last block plus classifier fine-tuned (earlier layers
frozen). SGD with momentum $0.9$, weight decay $1\!\times\!10^{-4}$,
cosine schedule, base learning rate $0.01$, batch $64$, mixed precision.
$\alpha=0.5$. Equivalent full-data epochs: $50$ for E1 (natural noise);
$25$ for E2 (natural plus $25\%$ targeted; budget halved deliberately, since
the targeted-attack effect is visible quickly and cluster compute is
constrained). Three seeds.
 
\subsection{Baseline-specific notes}
\label{app:baselines}
 
\textbf{AUM.} The original recipe \citep{pleiss2020identifying}
\emph{removes} the bottom-$\alpha$ fraction by AUM. Our
static-pruning protocol \emph{keeps} the top-$\alpha$ fraction
uniformly across all baselines; under set complement these two
operations are equivalent.
 
\textbf{EL2N and Forgetting.} Both are averaged over the same numbers of proxies as \drisShort{} for fairness ($K=3$ for main results); single-proxy versions track the ensemble
average within $1\,$pp in preliminary runs.
 
\textbf{Faithful RHO-LOSS.} A separate ResNet-20 reference model is
trained on a $5{,}000$-sample holdout for $30$ epochs; the score
$\max(\mathcal{L}(x;w_{\mathrm{target}}) -
\mathcal{L}(x;w_{\mathrm{ref}}),0)$ is recomputed every $5$ target
epochs, with the sampler rebuilt accordingly.
 
\subsection{Smoothing constant for online \drisShort{}}
\label{app:smoothing}
 
The smoothing in~\eqref{eq:dris_online} uses $\xi = 0.1$ ($10\%$ of
the average score added uniformly). This value was found insensitive
across $\xi \in [0.05, 0.5]$ in informal sweeps.
 
\subsection{Compute budget}
\label{app:compute}
 
All experiments run on a single A100-class GPU per configuration; the
full reported sweep (CIFAR-10, CIFAR-100, Food-101, synthetic study,
$K$-ablation, $\alpha$-sweep, $\varepsilon$-breakdown, proxy-diversity
ablation) totals approximately eight GPU-days.

\subsection{Software Infrastructure and Licenses.} \label{app:license}
All experiments were conducted using open-source software. The core deep learning framework was implemented in PyTorch \cite{paszke2019pytorch} and TorchVision (BSD License). Scientific computing, data manipulation, and proxy evaluations were performed using NumPy \cite{harris2020array}, SciPy \cite{virtanen2020scipy}, Pandas \cite{mckinney2010data}, and Scikit-learn \cite{pedregosa2011scikit} (BSD Licenses). Visualizations were generated using Matplotlib \cite{hunter2007matplotlib} and Seaborn (BSD Licenses). Additional experiment tracking and configuration utilities included \texttt{tqdm} (MIT/MPL License), \texttt{PyYAML} (MIT License), and \texttt{OmegaConf} (BSD License). All assets were used in accordance with their respective terms of use.
 
\section{Additional Results}
\label{app:exp:sweeps}

\subsection{Statistical Significance: Paired-\textit{t} Analysis for the core CIFAR-10 runs}
\label{app:paired_t_headline}

To ensure that the performance gains reported do not arise from stochastic variation in training, we perform a rigorous statistical validation using a paired-$t$ test. We extended the initial experiments from $n=3$ seeds to $n=7$ independent trials to gain sufficient statistical power. By pairing the trials by random seed, we isolate the specific impact of the sampling algorithm from the variance naturally induced by network initialization and mini-batch ordering. This analysis focuses on the performance delta between our proposed \drisShort{}-static method and the Random baseline.

As shown in \Cref{tab:paired_t_headline}, the performance improvement under targeted noise ($+7.21\,$pp) is highly significant ($p < 0.001$). The slight regression observed in the clean setting ($-0.64\,$pp) is statistically significant but small in magnitude. Theoretically, this regression reflects the intentional "clean-data cost" of our method: by strictly prioritizing high-disagreement boundary samples, \drisShort{} systematically discards the "easy", low-variance clean samples that uniform sampling retains. However, this minor trade-off may be justified by the substantial, statistically significant robustness gains achieved in noisy regimes, where the empirical separation gap ($\Delta' > 0$) consistently prevents the memorization of the corrupted bulk.

\begin{table}[ht!]
\centering
\caption{Paired-$t$ analysis for headline CIFAR-10 / VGG-19 results ($\text{DR-IS-static} - \text{Random}$) across $n=7$ independent trials.}
\label{tab:paired_t_headline}
\small
\begin{tabular}{lccccc}
\toprule
Cell & $n$ & mean diff (pp) & paired SD (pp) & $t$ & $p$ (two-sided) \\
\midrule
Clean & $7$ & $-0.64$ & $0.37$ & $-4.50$ & $0.0041$ \\
Targeted 25\% & $7$ & $+7.21$ & $1.23$ & $+15.54$ & $0.0000$ \\
\bottomrule
\end{tabular}
\end{table}

\subsection{Keep-fraction sweep ($\alpha \in \{0.25, 0.5, 0.75\}$)}
\label{app:exp:alpha}
 
\Cref{tab:alpha_sweep} extends the CIFAR-10 and CIFAR-100
targeted-$25\%$ panels to $\alpha \in \{0.5, 0.75\}$.
 
\drisShort{} retains $+5$--$8\,$pp gains over Random on CIFAR-10 and $+3$--$9\,$pp on CIFAR-100 across all three keep fractions, confirming the primary result is not specific to $\alpha=0.25$.
 
The boundary-heavy bias of \Cref{sec:method} grows with $\alpha$: at $\alpha = 0.75$, thresholding on $\rankvar$ cuts into the
easy-clean distribution and drops samples Random retains. 
This empirical degradation is a direct consequence of the boundary capacity constraint in \Cref{thm:main}(iii). The strict subset contamination bound requires that the clean boundary is large enough to fill the selected fraction ($|\Dbdry| \ge \alpha N$). When $\alpha$ is large, this capacity is exhausted. The selection threshold drops below the strict separation gap $\Delta'$, causing the top-$\alpha$ subset to simultaneously penalize low-variance easy examples and ingest the escaping corrupted tail $\Dcorr^{\mathrm{tail}}$.
Concurrently, AUM's ``keep top by mean margin'' strategy is naturally rewarded at high $\alpha$, since it preferentially retains easy samples. AUM overtakes \drisShort{} at $\alpha \ge 0.5$ on CIFAR-10 ($+1.71\,$pp at $\alpha=0.5$; $+6.20\,$pp at $\alpha=0.75$) and at $\alpha=0.75$ on CIFAR-100 ($+5.70\,$pp). EL2N and Consensus-loss remain below Random at every tested $\alpha$, as \Cref{prop:suboptimality} predicts.
 
\begin{table}[h]
\centering
\caption{Keep-fraction sweep at $\alpha \in \{0.25, 0.5, 0.75\}$ on CIFAR-10/VGG-19 and CIFAR-100/ResNet-18, targeted-25\% noise. Mean$\pm$std calculated over three seeds.}
\label{tab:alpha_sweep}
\small
\begin{tabular}{lcccccc}
\toprule
 & \multicolumn{3}{c}{\textsc{CIFAR-10/VGG-19}} & \multicolumn{3}{c}{\textsc{CIFAR-100/ResNet-18}} \\
Method & $\alpha=0.25$ & $\alpha=0.5$ & $\alpha=0.75$ & $\alpha=0.25$ & $\alpha=0.5$ & $\alpha=0.75$ \\
\midrule
Random & 74.87\,$\pm$\,0.57 & 75.73\,$\pm$\,0.36 & 75.79\,$\pm$\,0.06 & 51.53\,$\pm$\,0.59 & 59.21\,$\pm$\,0.45 & 63.30\,$\pm$\,0.23 \\
EL2N & 9.39\,$\pm$\,0.61 & 53.70\,$\pm$\,1.26 & 73.35\,$\pm$\,0.13 & 1.54\,$\pm$\,0.19 & 29.36\,$\pm$\,0.40 & 60.82\,$\pm$\,0.50 \\
Consensus-loss & 9.00\,$\pm$\,1.02 & 55.06\,$\pm$\,1.19 & 74.14\,$\pm$\,0.50 & 1.11\,$\pm$\,0.04 & 27.43\,$\pm$\,3.08 & 60.31\,$\pm$\,0.53 \\
AUM & \underline{79.00\,$\pm$\,0.96} & {\bf 85.38\,$\pm$\,0.35} & {\bf 87.32\,$\pm$\,0.13} & \underline{53.32\,$\pm$\,0.14} & \underline{64.30\,$\pm$\,0.14} & {\bf 71.80\,$\pm$\,0.01} \\
\drisShort{} (ours) & {\bf 81.81\,$\pm$\,0.90} & \underline{83.67\,$\pm$\,0.11} & \underline{81.12\,$\pm$\,0.42} & {\bf 60.96\,$\pm$\,0.68} & {\bf 65.56\,$\pm$\,0.37} & \underline{66.10\,$\pm$\,0.18} \\
\bottomrule
\end{tabular}
\end{table}

\subsection{Corruption-rate breakdown ($\varepsilon \in \{0.10, 0.25, 0.40, 0.45, 0.50\}$)}
\label{app:exp:eps}
 
\Cref{fig:breakdown_curve} shows the CIFAR-10/VGG-19/$\alpha=0.25$ configuration across corruption rates up to the theoretical breakdown point $\varepsilon = 0.50$ \citep{huber1964robust}.
 
\drisShort{} degrades gracefully through $\varepsilon = 0.25$
($81.81\%$, cis (corruption-in-subset) $= 0.072$), decays steeply through $\varepsilon = 0.45$
($62.02\%$, cis $= 0.408$), and collapses at $\varepsilon = 0.50$ ($51.89\%$, cis $= 0.560$), following the expected pattern for a rank-based robust selector at the Huber breakdown point. Viewed through \Cref{thm:main}(iii), this steep decay is driven by the swelling of the escaping tail $\Dcorr^{\mathrm{tail}}$. As the total corruption $\varepsilon$ increases, the corrupted outliers to clean boundary example ratio grows. Because the subset capacity $\alpha N$ is fixed, this inevitably loosens the contamination bound \eqref{eq:contam}, forcing the selection to admit more noise. Per-proxy training accuracy on $\Dcorr$ declines monotonically with $\varepsilon$ (from $0.21$ to $0.15$), confirming the proxies are not memorizing corrupted labels; the failure is concentration breakdown, not proxy memorization.
 
AUM is flat across the entire range ($79.00\%$--$76.43\%$, cis $\approx 0.001$ throughout), overtaking \drisShort{} at every $\varepsilon \ge 0.40$. The empirical cross-over lies between $\varepsilon = 0.25$ and $\varepsilon = 0.40$. Practitioners expecting $\varepsilon \ge 0.40$ should use AUM or a regime-aware combination.

\begin{figure}[h]
\centering
\includegraphics[width=\linewidth]{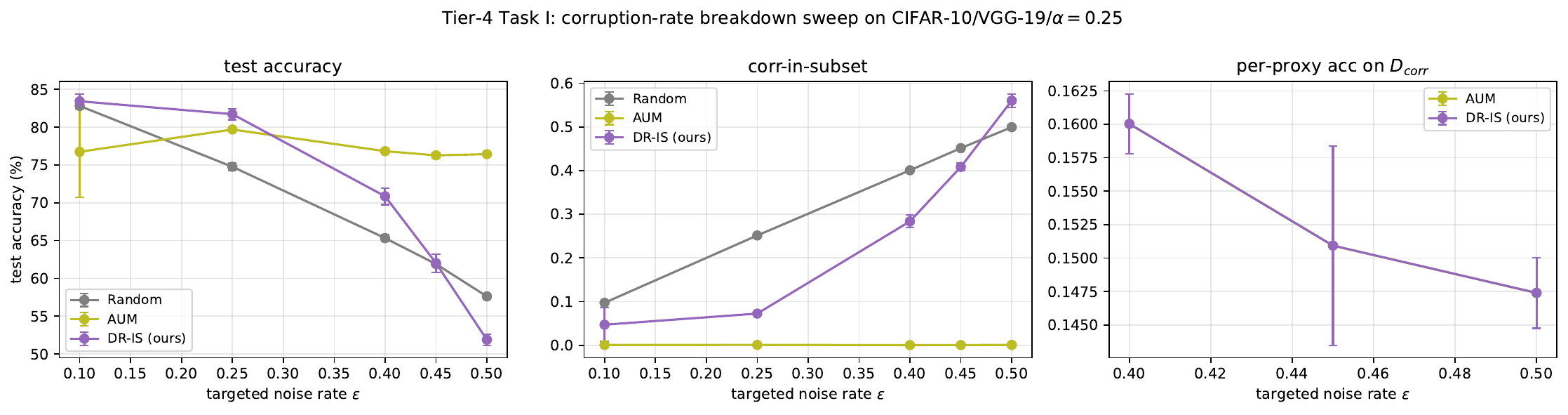}
\caption{Corruption-rate breakdown sweep. Left: test accuracy vs.\ $\varepsilon$. Middle: fraction of selected $\alpha N$ subset that is in the injected-corrupt mask. Right: per-proxy training-set accuracy on $\Dcorr$, averaged across the K=3 proxies — a rise toward 1 as $\varepsilon$ grows indicates the proxies have begun fitting the corrupted labels.}
\label{fig:breakdown_curve}
\end{figure}
 
\subsection{Proxy-diversity ablation}
\label{app:exp:proxy_diversity}
 
\paragraph{Setup.} Three $K=3$ configurations on
CIFAR-10/VGG-19/$\alpha=0.25$/targeted-$25\%$:
(K1) ResNet-20$\times 3$, different seeds (homogeneous);
(K2) MobileNetV2$_{0.5\times}\!\times 3$, different seeds (homogeneous);
(K3) \{ResNet-20, MobileNetV2$_{0.5\times}$, ShuffleNetV2$_{0.5\times}$\}
(heterogeneous default). Per-proxy train accuracies are within $2.3\,$pp
across K1/K2.
 
\paragraph{Findings.} \Cref{tab:proxy_diversity} and
\Cref{fig:proxy_diversity} show that $\widehat\tau^2_{\mathrm{bdry}}$
is essentially flat across configurations ($0.0085$--$0.0145$, with
overlapping $95\%$ bootstrap CIs). The homogeneous K1 has the
\emph{largest} $\widehat\tau^2_{\mathrm{bdry}}$ --- the opposite of the
naive diversity-helps prediction. Yet all three configurations work:
cis $\in [0.064, 0.077]$, test accuracy $\in [80.90, 83.13]$.

The empirical resolution is not contradictory to the mechanics of \Cref{thm:main}.
While the absolute maximum corrupted variance $\widehat\tau^2_{\mathrm{corr,max}}$ 
($\sim\!0.14$--$0.16$) exceeds the boundary median $\widehat\tau^2_{\mathrm{bdry,median}}$ 
($\sim\!0.009$--$0.014$), this is expected behavior under the theorem, as the maximum is driven entirely by the escaping tail $\mathcal{D}_{\mathrm{corr}}^{\mathrm{tail}}$. For the bulk of the corrupted set $\mathcal{D}_{\mathrm{corr}}^{(1-\alpha_{\text{trim}})}$, rank concentration is extremely sharp. Consequently, the bulk expected rank-disagreement upper bound ($\tau^2/4 + \gamma$) remains well below the boundary signal, yielding a positive, strict separation gap $\Delta' > 0$. \drisShort{}'s threshold-based selection reliably rejects this bulk, while the influence of the high-variance tail outliers is safely throttled by the subset capacity constraint (\Cref{eq:contam}).
 
\textit{Practical takeaway:} a homogeneous ensemble suffices;
architectural diversity is recommended but not a precondition.
 
\begin{table}[h]
\centering
\caption{Proxy-diversity ablation on CIFAR-10/VGG-19/$\alpha=0.25$. K1 = $K=3$ ResNet-20s (homogeneous). K2 = $K=3$ MobileNetV2$_{0.5\times}$s (homogeneous, different architecture). K3 = $K=3$ heterogeneous proxies (paper default). All cells under 25\% targeted noise. Heterogeneous priors should widen $\widehat\tau^2_{\mathrm{bdry}}$ (Assumption~3).}
\label{tab:proxy_diversity}
\small
\begin{tabular}{lccccc}
\toprule
Config & DR-IS acc & DR-IS corr-in-subset & $\widehat\tau^2_{\mathrm{corr}}$ & $\widehat\tau^2_{\mathrm{bdry}}$ & proxy train-acc \\
\midrule
K1 & 80.90\,$\pm$\,0.47 & 0.077\,$\pm$\,0.010 & 0.1451\,$\pm$\,0.0005 & 0.0145\,$\pm$\,0.0005 & 75.7\,$\pm$\,0.1 \\
K2 & 83.13\,$\pm$\,0.35 & 0.064\,$\pm$\,0.002 & 0.1350\,$\pm$\,0.0285 & 0.0085\,$\pm$\,0.0005 & 73.4\,$\pm$\,0.2 \\
K3 & 81.73\,$\pm$\,0.74 & 0.073\,$\pm$\,0.004 & 0.1610\,$\pm$\,0.0142 & 0.0094\,$\pm$\,0.0004 & --- \\
\bottomrule
\end{tabular}
\end{table}

\begin{figure}[h]
\centering
\includegraphics[width=\linewidth]{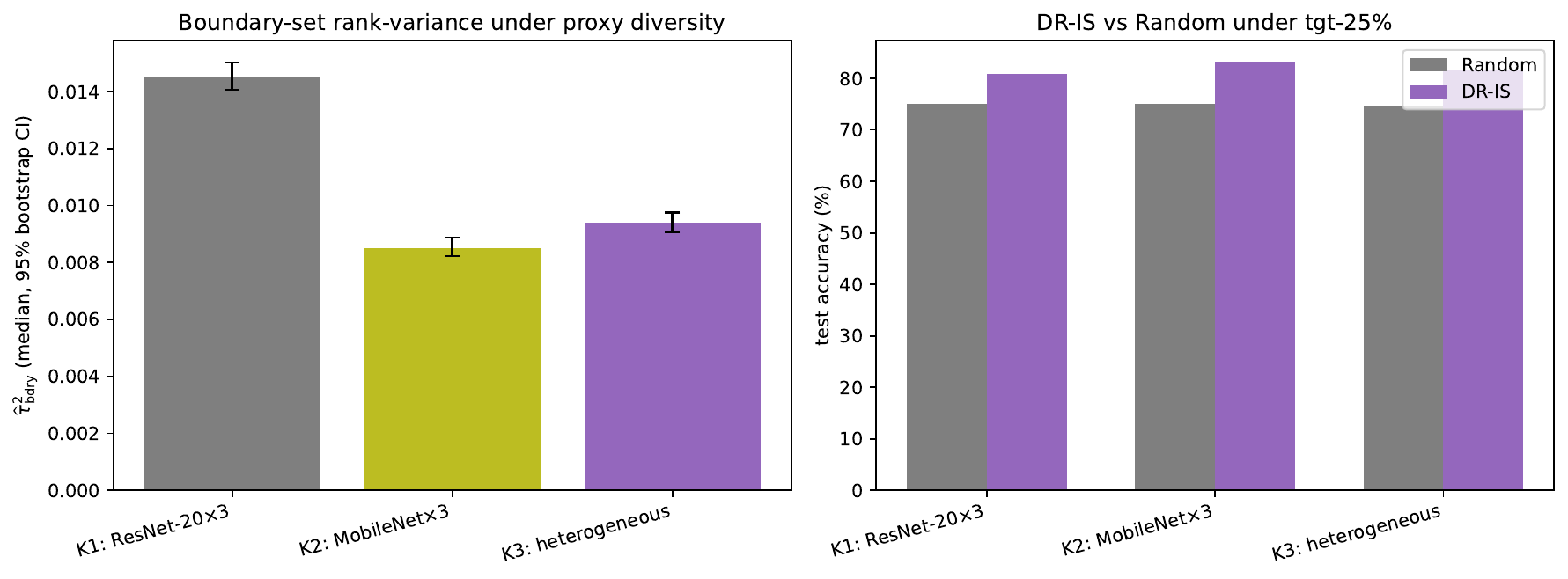}
\caption{Proxy-diversity ablation on
CIFAR-10/VGG-19/$\alpha = 0.25$/targeted-$25\%$. \emph{Left}:
$\widehat{\tau}^2_{\mathrm{bdry}}$ (median over the bottom-$20\%$
clean samples by ensemble-mean margin) for K1 ($K=3$ ResNet-20s,
homogeneous), K2 ($K=3$ MobileNetV2$_{0.5\times}$s, homogeneous,
different architecture), and K3 (heterogeneous default), with $95\%$
bootstrap confidence intervals. The diversity hypothesis predicts K3
$>$ K1, K2; the data shows the values overlap and the homogeneous K1
in fact has the largest $\widehat{\tau}^2_{\mathrm{bdry}}$.
\emph{Right}: \drisShort{} test accuracy under targeted-$25\%$ for
each configuration, against the Random baseline. \drisShort{} works
in all three configurations (cis $\in [0.064, 0.077]$); proxy
diversity is a recommended default but not a load-bearing precondition
for \Cref{thm:main}'s rejection mechanism in our regime.}
\label{fig:proxy_diversity}
\end{figure}

\begin{figure}[h]
\centering
\includegraphics[width=0.6\linewidth]{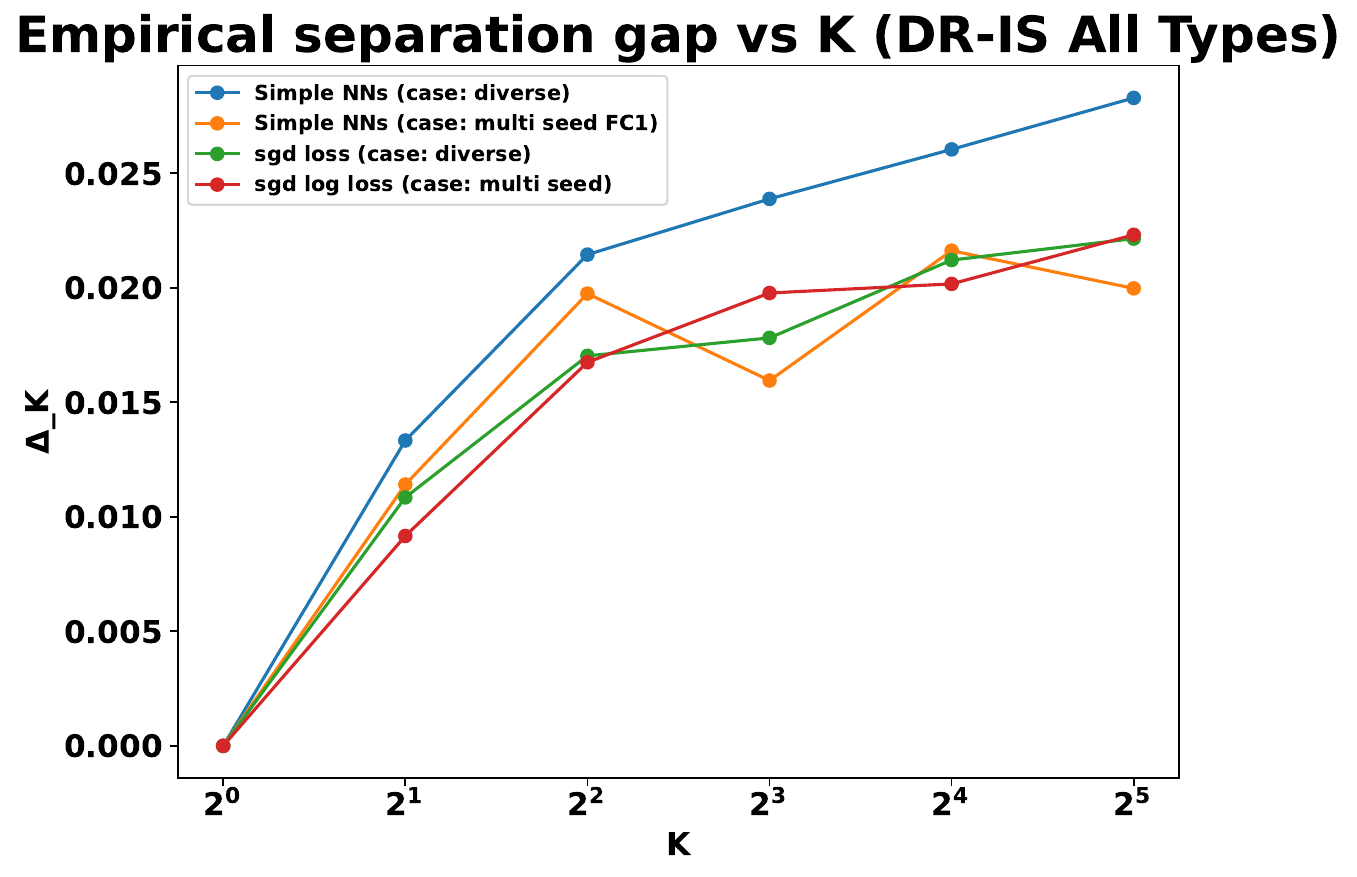}
\caption{\drisShort{} ensemble separation score comparison between multi seed and multi architecture settings with neural network and other models as proxies.}
\label{fig:mnist_dris_gap}
\end{figure}

\subsection{Proxy type selection}
\label{app:exp:proxy_type}

\paragraph{Setup.} For the MNIST \cite{lecun1998gradient} benchmark, we evaluate four distinct proxy ensemble configurations. The primary \drisShort{} ensemble consist of a single-layer fully connected (FC) network and a scikit-learn SGD log-loss classifier \cite{pedregosa2011scikit}, both aggregated across multiple seeds. For comparison, AUM scores are computed using a single model. Additionally, we assess a heterogeneous mix of four simple neural architectures alongside SGD classifiers with varying loss functions, utilizing multi-seed resampling to achieve ensemble sizes of $K > 4$. In all cases, proxies are trained for 10 epochs. For neural network models SGD with $0.05$ learning, $0.9$ momentum and $5\times 10^{-4}$ is used. For scikit-learn models the built in \textit{partial\_fit} function. Data poisoning was set to $\varepsilon = 20\%$ uniform.

\paragraph{Findings.} As shown in \Cref{fig:mnist_dris_gap} and \Cref{fig:mnsit_ablation}, the specific choice of proxy architecture --- ranging from single-layer FC networks to scikit-learn SGD classifiers --- has a limited impact on the resulting separation scores. Interpreting this through our theoretical framework, this architectural independence is expected: the validity of \Cref{thm:main} relies on the presence of a "simplicity bias" (\Cref{assu:adv_concentration}) and boundary stochasticity (\Cref{assu:bdry_disagreement}), rather than deep-learning-specific artifacts. As long as a proxy hypothesis class possesses sufficient capacity to prioritize easy clean data while consistently struggling to memorize random noise, the necessary rank-variance signal will emerge. The slight performance edge of the multi-seed FC ensemble is thus attributable to specific hyperparameter tuning rather than a fundamental necessity of deep homogeneous ensembles. Consequently, practitioners can safely prioritize computational constraints when selecting proxies, confident that the theoretical separation guarantees hold across diverse model families.

\begin{figure}[h]
\centering
\includegraphics[width=0.75\linewidth]{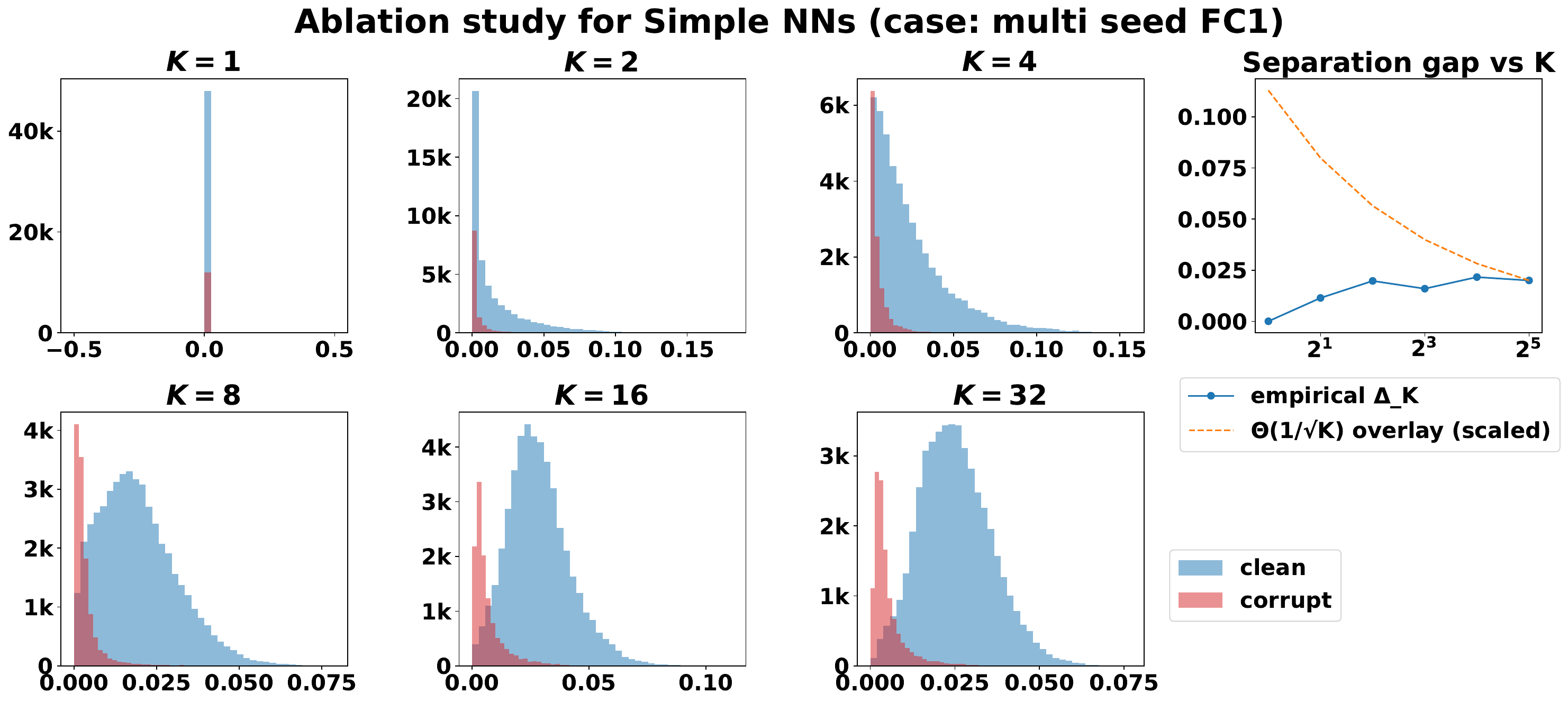}
\\[1ex]
\includegraphics[width=0.75\linewidth]{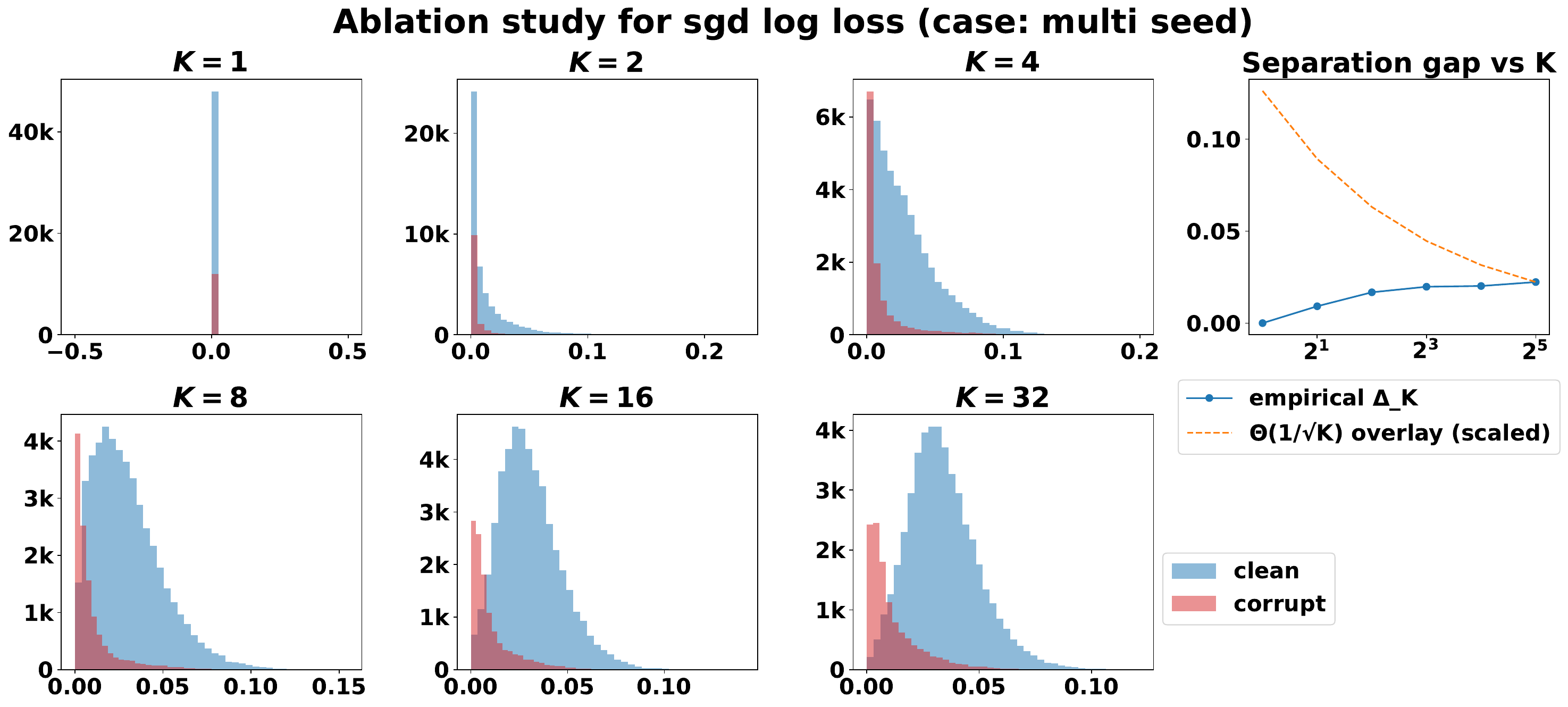}
\caption{\textbf{Up:} Single layer FC, \textbf{Down:} SGD log loss.\\
Within figures: \textit{Left:} $\rankvar$ histograms on $\Dclean$ (blue) and
$\Dcorr$ (red) at $K \in \{1,2,4,8,16,32\}$, CIFAR-10, $10\%$
targeted noise.
\textit{Right:} empirical gap
$\widehat{\Delta}_K = \overline{\rankvar}|_{\Dclean} -
\overline{\rankvar}|_{\Dcorr}$ (solid) vs.\ $\Theta(K^{-1/2})$
McDiarmid radius (dashed). Empirical separation outpaces the
worst-case bound.}
\label{fig:mnsit_ablation}
\end{figure}

As illustrated in \Cref{fig:mnist_aum}, AUM leverages a single proxy to achieve a separation between clean and corrupted samples that is highly effective, occasionally surpassing \drisShort{} with large $K$. However, AUM is fundamentally constrained by its reliance on raw logit margins. Because logit scales are unbounded and heavily influenced by specific network architectures and regularizers, AUM scores suffer from architecture-dependent scaling. By transforming raw losses into normalized ranks ($\rho_i \in (0,1]$), \drisShort{} establishes an inherently proxy-independent and scale-free metric.

Further, this rank transformation fundamentally dictates the empirical distributions observed: while AUM yields approximately Gaussian margin profiles, \drisShort{} produces a truncated distribution strictly bounded at zero due to the non-negativity of variance. This strict boundedness is not merely an empirical curiosity; it is the exact structural property that permits the rigorous application of McDiarmid's inequality in \Cref{thm:main}. By anchoring the metric in bounded rank-variance, \drisShort{} sacrifices some of AUM's raw margin spread in exchange for uniform concentration bounds and deterministic contamination certificates that unbounded margin scores cannot theoretically guarantee.

\begin{figure}[h]
\centering
\includegraphics[width=0.6\linewidth]{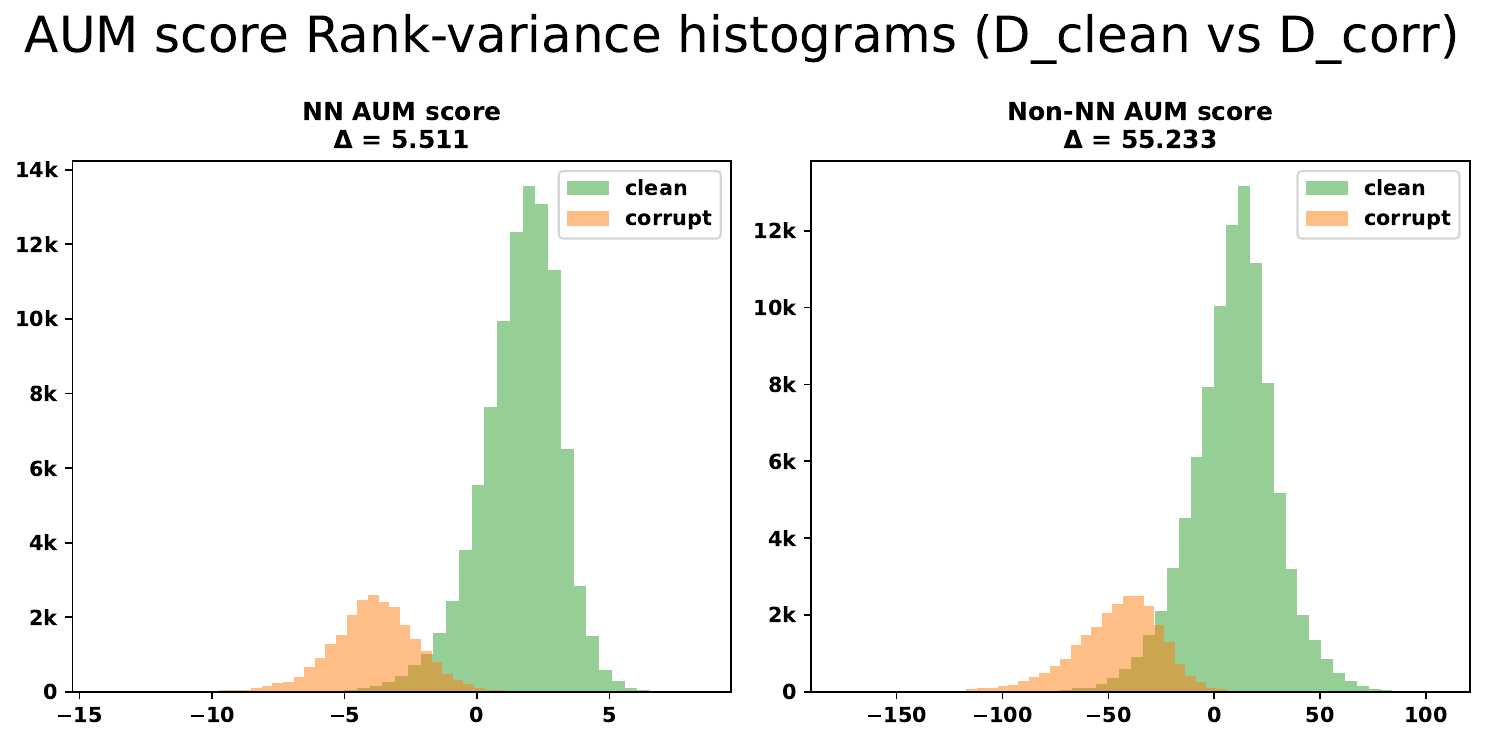}
\caption{AUM identification histograms on $\mathcal{D}_\text{clean}$ (blue) and $\mathcal{D}_\text{cor}$ (red) on MNIST, $20\%$ uniform noise. (\textit{left:} single layer FC, \textit{right:} SGD log loss classifier)}
\label{fig:mnist_aum}
\end{figure}


\subsection{Hybrid score ablation}
\label{app:exp:hybrid}
 
The hybrid score $q_i \propto \beta\|\nabla f_i\| + (1-\beta)\rankvar_i$
($\beta \in [0,1]$) interpolates between gradient-norm IS ($\beta=1$)
and \drisShort{} ($\beta=0$).
\Cref{tab:hybrid_ablation} reports CIFAR-10/VGG-19/$\alpha=0.25$ results.
$\beta=1$ outperforms $\beta=0$ on clean data by $1.83\,$pp
($88.65\%$ vs $86.82\%$) but collapses under $25\%$ targeted noise;
$\beta=0$ retains $81.41\%$. No interior $\beta$ dominates $\beta=0$
on the clean--adversarially-robust Pareto frontier: even $\beta=0.25$
sacrifices $4\,$pp of robustness for $0.1\,$pp of clean accuracy. The two
signals do not combine usefully under the protocol studied here.

This catastrophic degradation at intermediate $\beta$ is a direct consequence of the structural vulnerability outlined in \Cref{prop:suboptimality}. The two components of the hybrid score operate on fundamentally incompatible scales under contamination. Because the \drisShort{} score $\rankvar_i$ is derived from normalized ranks ($\rho_i \in (0,1]$), its maximum possible value is strictly bounded. Conversely, the gradient norm $\|\nabla f_i\|$ is unbounded and grows disproportionately large for corrupted outliers. 
Consequently, even for small values of $\beta$, the massive magnitude of the corrupted gradients completely overshadows the bounded disagreement signal. The corrupted outliers effectively "hijack" the sampling mass. This dynamic instantly shatters the strict separation gap ($\Delta' > 0$) established in \Cref{thm:main}, forcing the top-$\alpha$ subset to ingest the corrupted bulk and reintroducing the adversarial mass allocation that universally plagues magnitude-based IS.

The original motivation for the hybrid score is to mitigate \drisShort{}'s "clean-data cost" --- the tendency to aggressively filter out "easy" clean samples in favor of the clean boundary. To achieve this without sacrificing robustness, interpolating \drisShort{} with \emph{uniform sampling} may be a theoretically more sound approach. 
Consider a uniform-hybrid policy $q_i \propto (1-k)(\frac{1}{N}) + k\rankvar_i$. This mix would successfully reintroduce easy clean samples into the training batches under limited contamination, without actively rewarding high-loss outliers. Because uniform sampling is magnitude-independent, it strictly bypasses the adversarial amplification detailed in \Cref{prop:suboptimality}. Under a uniform mix, the rigorous contamination bound derived in \Cref{thm:main}(iii) would degrade gracefully toward the base dataset corruption rate $\varepsilon$, rather than collapsing entirely as it does when mixed with gradient norms. Thus, blending rank-variance with uniform probability provides a safe mechanism to tune the boundary-heavy bias of \drisShort{}.

\begin{table}[h]
\centering
\caption{Hybrid score ablation
$q_{i} \propto \beta\,\|\nabla f_{i}\| + (1-\beta)\,\rankvar_{i}$ on
CIFAR-10/VGG-19 static pruning at $\alpha=0.25$ (mean$\pm$std, 3 seeds).
$\beta=0$ recovers \drisShort{}; $\beta=1$ recovers gradient-norm IS.}
\label{tab:hybrid_ablation}
\small
\begin{tabular}{lccccc}
\toprule
Noise & $\beta = 0$ & $\beta = 0.25$ & $\beta = 0.5$ & $\beta = 0.75$ & $\beta = 1$ \\
\midrule
Clean         & $86.82 \pm 0.17$ & $86.94 \pm 0.24$ & $86.89 \pm 0.21$ & $87.11 \pm 0.23$ & $\mathbf{88.65 \pm 0.44}$ \\
Targeted $25\%$ & $\mathbf{81.41 \pm 0.47}$ & $77.38 \pm 0.26$ & $63.18 \pm 2.10$ & $43.79 \pm 4.20$ & $10.00 \pm 0.00$ \\
\bottomrule
\end{tabular}
\end{table}

\subsection{Running Times for Experiments in Table \ref{tab:cifar_pruning}}
\label{app:tab:running_times}
\begin{table}[H]
\centering
\caption{Per-method target-training wall-clock for Table~\ref{tab:cifar_pruning}, minutes per run on a single NVIDIA A100 (mean\,$\pm$\,std across seeds and noise conditions; $n$ in parentheses is the number of runs aggregated). }
\label{tab:wallclock_t1}
\small
\begin{tabular}{lcc}
\toprule
Method & \textsc{CIFAR-10 / VGG-19} & \textsc{CIFAR-100 / ResNet-18} \\
\midrule
Random & 24.5\,$\pm$\,2.4 \,(34) & 29.1\,$\pm$\,3.7 \,(20) \\
Forgetting & 26.1\,$\pm$\,0.6 \,(12) & --- \\
EL2N & 23.7\,$\pm$\,3.4 \,(20) & 30.1\,$\pm$\,4.5 \,(20) \\
AUM (lowest removed) & 23.9\,$\pm$\,2.9 \,(26) & 29.3\,$\pm$\,3.8 \,(20) \\
Consensus-loss & 23.5\,$\pm$\,3.2 \,(20) & 29.6\,$\pm$\,4.1 \,(20) \\
\drisShort{} (ours) & 24.3\,$\pm$\,2.6 \,(34) & 29.2\,$\pm$\,3.6 \,(20) \\
\bottomrule
\end{tabular}
\end{table}

The wall-clock measurements reported in \Cref{tab:wallclock_t1} demonstrate that the target training duration is essentially uniform across all evaluated data pruning and importance sampling methodologies. For the CIFAR-10/VGG-19 configuration, training times remain between $23.5$ and $26.1$ minutes per run, regardless of the sampling rule applied. A similar consistency is observed in the CIFAR-100/ResNet-18 experiments, with all methods clustering near a 30-minute training window.  

The observed within-cell variance is largely attributable to the dispatch of runs to a heterogeneous mix of A100 40 GB and A100 80 GB nodes. These findings indicate that the inclusion of the \drisShort{} scoring mechanism introduces no measurable overhead to the model optimization phase. While the reported figures exclude the one-time proxy-ensemble training cost ($K=3$ proxies, 40 epochs each), this cost is cached per (dataset, noise, seed) and shared across all score-consuming methods. Earlier analysis confirms that such pre-computation remains efficient, typically staying below $10\%$ of total compute. Consequently, the substantial robustness gains offered by \drisShort{} are achieved with a runtime footprint practically identical to that of standard uniform SGD.

\section{Food-101: Full Results}
\label{app:food101}
 
\paragraph{Setup.} The target model is a ResNet-50 (ImageNet-pretrained, with the last block and classifier fine-tuned). For the proxy ensemble, we use $K=3$ ImageNet-pretrained backbones (ResNet-18, MobileNetV3-Small, EfficientNet-B0), which are linear-probed for $10$ epochs to maintain computational efficiency on this larger-scale dataset. We set the keep-fraction to $\alpha=0.5$ and average across three seeds. Note that the per-cell standard deviation is essentially zero for all non-Random methods; because the proxy scores are deterministic given the frozen features, only the baseline Random subset varies by seed.
 
\textbf{Mechanism diagnostic.} A separate ResNet-50 trained on the hand-cleaned test set provides reference predictions. Training examples where the reference prediction disagrees with the training label are flagged as candidate noise. Roughly $25\%$ of the training set is flagged this way --- well above the original $\sim\!5\%$ dataset estimate, indicating that these flags capture mostly model imprecision rather than strict mislabeling. This diagnostic serves solely as a descriptive tool to analyze subset composition.
 
\paragraph{Full findings (see \Cref{tab:food101}).}
 
\emph{E1 (natural noise only).}
\drisShort{} matches the Random baseline ($83.04\%$ vs.\ $82.67\%$, $+0.37\,$pp). Further, the fraction of candidate noise in \drisShort{}'s selected subset remains at chance ($25\%$ vs.\ the $25\%$ background rate). No active filtering occurs, which is entirely consistent with our threat model: natural label noise is generally not loss-aligned and therefore does not satisfy the proxy simplicity bias required by \Cref{assu:adv_concentration}.
 
\emph{E2 (natural + injected $25\%$ targeted noise).}
Under adversarial corruption, EL2N collapses to $35.51\%$, capturing $49\%$ of the injected corruption in its subset (an $\approx\!2\times$ amplification over chance). \drisShort{} effectively filters the attack, reaching $73.67\%$ (only $18\%$ injected corruption in its subset) and yielding a $+2.98\,$pp gain over Random ($70.69\%$). The margin of improvement is narrower here than on CIFAR, which reflects the reduced proxy hypothesis richness, and thus a weaker boundary disagreement signal resulting from relying on linear-probed backbones for a complex $101$-class task.
 
AUM achieves $75.56\%$ on E2 (near-perfect adversarial rejection, with only $1\%$ corruption in its subset) but incurs a steep $4.93\,$pp penalty on E1 ($77.74\%$ vs.\ Random's $82.67\%$). AUM's learning-trajectory instability signal filters both the injected adversarial corruption \emph{and} a large slice of the natural-noise candidates that \drisShort{}'s rank-disagreement deliberately ignores --- highlighting a genuine mechanistic difference between the two approaches. Under the combined natural and adversarial corruption of Food-101, AUM is the stronger performer; conversely, on the cleaner CIFAR benchmarks, \drisShort{} dominates AUM by $2.2$--$7.6\,$pp under attack.

\section{RHO-LOSS: Approximate and Faithful Implementations}
\label{app:rho_loss_breakdown}
 
The original RHO-LOSS \citep{mindermann2022prioritized} scores each
example by $\mathcal{L}(x;w_{\mathrm{target}}) -
\mathcal{L}(x;w_{\mathrm{ref}})$ where $w_{\mathrm{ref}}$ is trained on
held-out data, intending to filter aleatoric noise.
 
\textbf{Approximate variant.} The target-loss term is replaced by the proxy-ensemble mean loss, sharing infrastructure with \drisShort{}; score is computed once and frozen.
 
\textbf{Faithful variant.} A ResNet-20 reference model is trained on a
$5{,}000$-sample holdout for $30$ epochs; the score is recomputed every
$5$ target epochs with the sampler rebuilt accordingly.
 
As detailed in \Cref{tab:online_is}, both variants severely underperform uniform random sampling across all settings. For example, at $25\%$ targeted noise, uniform SGD achieves $74.52\%$ accuracy and DR-IS-online achieves $84.04\%$, while the approximate RHO-LOSS variant collapses entirely ($10.00\% \pm 0.00\%$) and the faithful variant struggles significantly ($31.30\% \pm 36.89\%$). 

The failure mode is \emph{not} magnitude-amplification (\Cref{prop:suboptimality}): rather, the extreme per-seed accuracy standard deviations---ranging from $17.68$ to $45.59$ percentage points on clean and $10\%$ targeted noise settings---indicate sampler-collapse divergence onto small, high-score subsets that the target model then memorizes. This is a known fragility of difference-based scoring with sparse non-zero scores, distinct from adversarial amplification. We found no configuration of either variant that recovers competitive accuracy on clean or noisy CIFAR-10 under our keep-fraction protocol.
 
\section{Synthetic Study}
\label{app:synthetic}
 
\paragraph{Setup.} We construct a synthetic benchmark in the
heavy-tailed regime studied by \citet{zhao2015stochastic}: $N=2{,}000$
samples in $\mathbb{R}^{20}$ from a two-cluster mixture (rare cluster
$\sigma^2_{\mathrm{rare}}=400$, common $\sigma^2=1$, rare ratio $0.1$),
squared-hinge loss, $\ell_2$-regularization $\lambda=0.1$, $200$
epochs, SGD base learning rate $0.01$ with textbook decreasing
schedule (clamped to avoid overshooting on the high-variance cluster).
\drisShort{} uses $K=4$ linear proxy classifiers (squared hinge, same
regularization, different seeds). The targeted high-norm corruption is
constructed exactly as in the main experiments: an attacker model
trained on clean data flips the labels of the top-$\nu$ fraction by
gradient norm.
 
\begin{table}[t]
\centering
\caption{Synthetic heavy-tailed data: test accuracy (\%, mean$\pm$std
over 5 seeds), $N = 2{,}000$ samples in $\R^{20}$, two-cluster mixture
with rare-cluster variance $400$. \drisShort{} uses $K = 4$ linear
proxy classifiers. The uniform-$10\%$ cell exposes a controlled failure
mode of \drisShort{}: the small linear-proxy class violates
\Cref{assu:adv_concentration} (uniformly-flipped labels do not
produce uniformly hard ranks across the prior), so corrupted samples
acquire high $\rankvar$ and \drisShort{} \emph{selects} rather than
rejects them. The same effect does not occur on CIFAR-10/100 with
deeper proxy ensembles (\Cref{tab:cifar_pruning}). Under targeted
noise at $10$ and $25\%$, all four methods collapse to near-chance
($\sim 20$--$25\%$) on this small linear problem; we discuss this in
\Cref{app:synthetic}.}
\label{tab:synthetic}
\small
\begin{tabular}{lcccc}
\toprule
Method & Clean & Uniform 10\% & Targeted 10\% & Targeted 25\% \\
\midrule
Uniform SGD & 95.84\,$\pm$\,0.56 & 86.16\,$\pm$\,3.07 & 23.62\,$\pm$\,4.73 & 21.22\,$\pm$\,7.79 \\
Standard~IS & 95.92\,$\pm$\,0.60 & 87.16\,$\pm$\,2.73 & 22.62\,$\pm$\,4.69 & 19.80\,$\pm$\,9.06 \\
Loss~IS & 95.86\,$\pm$\,0.57 & 86.74\,$\pm$\,2.74 & 24.94\,$\pm$\,2.55 & 21.62\,$\pm$\,6.36 \\
\drisShort{} (ours) & 94.64\,$\pm$\,0.47 & 53.86\,$\pm$\,2.31 & 22.98\,$\pm$\,3.60 & 20.70\,$\pm$\,4.70 \\
\bottomrule
\end{tabular}
\end{table}

\paragraph{Three observations.}
 
\emph{(i) Clean.} \drisShort{} is within $1.2\,$pp of uniform SGD
($94.64\%$ vs $95.84\%$), consistent with CIFAR-10 findings.
 
\emph{(ii) Uniform $10\%$ noise: \drisShort{} fails.} \drisShort{}
reaches only $53.86\%$ vs uniform SGD's $86.16\%$; standard IS and
loss-IS retain $\sim\!87\%$. This is the only cell across the entire
sweep where \drisShort{} loses to any baseline by more than a few
percentage points. The failure is theoretically informative: it is an
empirical demonstration that \Cref{assu:adv_concentration} does real
work. With $K=4$ linear proxies on $N=2{,}000$ samples, the empirical
$\rankvar$ on uniformly-corrupted points is comparable to that on
genuine boundary samples, so the top-$\alpha$ subset becomes
corruption-enriched. The dual mechanism to \Cref{prop:suboptimality}:
when corrupted points have high rank-variance (failing \Cref{assu:adv_concentration}), \drisShort{}
\emph{selects} rather than rejects them. This failure does not reproduce
on CIFAR-10/100 because richer proxy architectures on larger datasets
empirically validate \Cref{assu:adv_concentration} (\Cref{fig:K_ablation}).
 
\emph{(iii) Targeted $10$--$25\%$ noise.} All methods collapse to
$20$--$25\%$ accuracy (below the $50\%$ two-class chance), reflecting
the corrupted rare cluster inverting the linear separator. This is a
property of the small-$N$ synthetic problem, not of any sampling
strategy; this regime does not discriminate between methods.
 
The synthetic study motivates the linear IS-failure-mode story and
should be read alongside \Cref{app:diagnostics}. The CIFAR results in
\Cref{sec:exp:cifar_pruning} are the load-bearing positive evidence.
 
\section{Diagnostics for the Assumptions}
\label{app:diagnostics}
 
Before deployment, practitioners can verify \Cref{assu:adv_concentration} (A1) and
\Cref{assu:bdry_disagreement} (A2) via simple empirical checks on the rank-variance histogram:
 
\textbf{(A1):} Compute the rank distribution of every example across the proxy ensemble. The bulk of the corrupted set should concentrate at consistently high ranks, yielding a low empirical $\rankvar$. 
The histograms in \Cref{fig:K_ablation} (left panel) provide
exactly this diagnostic for CIFAR-10 at $10\%$ targeted noise: the corrupted (red) distribution concentrates heavily near zero $\rankvar$ for $K \ge 4$, empirically validating the bulk separation condition required by \Cref{thm:main}.

For practical pre-deployment check this method requires that at least part of your corrupted labels been flagged. Potentially by artificially corrupting a subset of the samples, this makes testing against specific attack models feasible.

\textbf{(A2):} Compute the empirical histogram of $\rankvar$ over the
full dataset; a unimodal histogram tightly clustered near zero indicates that the proxy prior is too narrow or deterministic to support \drisShort{}-style selection. \Cref{fig:K_ablation} shows a clear bimodal distribution at $K \ge 8$, with the clean (blue) mass spread across positive $\rankvar$, validating A2. The synthetic study (\Cref{app:synthetic}) illustrates the contrasting case where the prior is too simple and A1 fails by design.


\end{document}